\newtheoremstyle{plain}
	  {}
	  {}
	  {\itshape}
	  {}
	  {\bfseries}
	  {}
	  {5pt plus 1pt minus 1pt}
	  {}
\newtheoremstyle{definition}
  	  {}
	  {}
	  {\normalfont}
	  {}
	  {\bfseries}
	  {}
	  {5pt plus 1pt minus 1pt}
	  {}
\theoremstyle{plain}
\newtheorem{theorem}{Theorem}
\newtheorem{lemma}{Lemma}
\newtheorem{proposition}{Proposition}
\newtheorem{corollary}{Corollary}
\theoremstyle{definition}
\newtheorem{definition}{Definition}
\newcommand{\refeq}[1]			{(\ref{#1})} 
\newcommand{\reffig}[1]			{Fig. \ref{#1}} 
\newcommand{\refsec}[1]			{Section \ref{#1}}
\newcommand{\refapp}[1]			{Appendix \ref{#1}}
\newcommand{\refprop}[1]		{Proposition \ref{#1}}
\newcommand{\refcor}[1]			{Corollary \ref{#1}}
\newcommand{\reflem}[1]			{Lemma \ref{#1}}
\newcommand{\refdef}[1]			{Definition \ref{#1}}
\theoremstyle{plain}
\newtheorem*{MotionPlannerExtension}{From Low to High Order Safe Motion Planners}
\newcommand{\R}  	{\mathbb{R}} 
\newcommand{\dimspace} 	{d}
\newcommand{\radius} 	{\rho}
\newcommand{\conv}      {\mathrm{conv}} 
\newcommand{\freespace}	{\mathcal{F}} 
\newcommand{\workspace}	{\mathcal{W}} 
\newcommand{\obstspace}	{\mathcal{O}} 
\newcommand{\ball}      {\vect{B}} 
\newcommand{\path}      {\mathcal{P}}
\newcommand{\vsimplex}{\mathcal{V}}
\newcommand{\simplex}{\vartriangle}
\newcommand{\pos} 		{\vect{x}} 			
\newcommand{\goal}		{\vect{x}^*} 
\newcommand{\ctrlgoal}	{\vect{y}} 
\newcommand{\ctrlmat}{\mat{K}} 
\newcommand{\state}		{\mat{x}} 			
\newcommand{\order}     {n} 				
\newcommand{\gain}   	{\kappa} 			
\newcommand{\ctrl}      {\vect{u}} 			
\newcommand{\lyapmat}	{\mat{P}} 
\newcommand{\decaymat}  {\mat{D}} 
\newcommand{\govpos} 	{\vect{y}}
\newcommand{\govstate}	{\mat{y}}
\newcommand{\govgain}   {\kappa_g}
\newcommand{\refplanner}{\vect{r}}
\newcommand{\refplan}  	{\vect{r}} 
\newcommand{\refdomain}	{\mathcal{D}} 
\newcommand{\motionrange}	{\mathcal{M}} 
\newcommand{\safedist}  {\mathrm{dist}}
\newcommand{\safelevel}{\sigma} 
\newcommand{\motionrangebound}	{\eta} 
\newcommand{\PDM} 	{S_{++}} 
\newcommand{\PSDM} 	{S_{+}} 
\newcommand{\Lyap} 				{\mat{P}}
\newcommand{\proj} {\Pi}
\newcommand{\elp}		{\mathcal{E}} 
\newcommand{\elpctr}	{\vect{c}} 
\newcommand{\elpmat} 	{\mat{\Sigma}} 
\newcommand{\elprad}	{\rho} 
\let\originalleft\left
\let\originalright\right
\renewcommand{\left}{\mathopen{}\mathclose\bgroup\originalleft}
\renewcommand{\right}{\aftergroup\egroup\originalright}
\newcommand{\plist}[1] 	{\left(#1\right)} 
\newcommand{\blist}[1]	{\left[ #1 \right]} 
\newcommand{\clist}[1]	{\left\{#1\right\}} 
\newcommand{\vect}[1]   {\mathrm{#1}}
\newcommand{\mat}[1]    {\mathbf{#1}}
\newcommand{\tr}[1] {{#1}^{\mathrm{T}}} 
\newcommand{\norm}[1]  {\|#1\|}
\newcommand{\absval}[1]{\left|#1 \right|} 
\newcommand{\diag} {\mathrm{diag}} 
\newcommand{\ldf}   {:=} 
\newcommand{\argmin}{\operatornamewithlimits{arg\ min}} 
\newcommand{\diff} {\mathrm{d}}
\newcommand{\deriv}	{\mathrm{d}} 
\title{\LARGE \bf
From Low to High Order Motion Planners: \\ Safe Robot Navigation
using Motion Prediction and Reference Governor
}
\author{Aykut \.{I}\c{s}leyen and Nathan van de Wouw and \"{O}m\"{u}r Arslan
\thanks{The authors are with the Department of Mechanical Engineering, Eindhoven University of Technology, P.O. Box 513, 5600 MB Eindhoven, The Netherlands. The authors are also affiliated with the Eindhoven AI Systems Institute. Emails:  \{a.isleyen, n.v.d.wouw, o.arslan\}@tue.nl}%
}
\begin{document}

\maketitle
\thispagestyle{empty}
\pagestyle{empty}

\begin{abstract}
Safe navigation around obstacles is a fundamental challenge for highly dynamic robots.
The state-of-the-art approach for adapting simple reference path planners to complex robot dynamics using trajectory optimization and tracking control is brittle and requires significant replanning cycles.
In this paper, we introduce a novel feedback motion planning framework that extends the applicability of low-order  (e.g. position-/velocity-controlled) reference motion planners to high-order (e.g., acceleration-/jerk-controlled) robot models using motion prediction and reference governors. 
We use predicted robot motion range for safety assessment and establish a bidirectional interface between high-level planning and low-level control via a reference governor.
We describe the generic fundamental building blocks of our feedback motion planning framework and give specific example constructions for motion control, prediction, and reference planning.
We prove the correctness of our planning framework and demonstrate its performance in numerical simulations. 
We conclude that accurate motion prediction is crucial for closing the gap between high-level planning and low-level control.   

\end{abstract}

\section{Introduction}

Safe and smooth robot motion is fundamental for many autonomous systems, and people interacting with them.
Kinodynamic motion planning of dynamically feasible and safe trajectories is known to be computationally hard for many robotic systems \cite{donald_etal_JACM1993, lavalle_kuffner_IJRR2001} because determining safety of highly dynamic robot systems is a challenge \cite{fraichard_asama_AR2004}.
The state-of-the-art smooth motion planning approaches \cite{ravankar_etal_Sensors2018} often start with a simple reference path planner \cite{lavalle_PlanningAlgorithms2006}, and then build a dynamically feasible and safe robot trajectory (i.e., a time-parametrized path) around a reference path using trajectory optimization  \cite{richter_etal_ISRR016}.
However, such trajectory planning methods often suffer from significant replanning cycles in practice due to precise trajectory tracking requirements \cite{ding_gao_wang_shen_TRO2019, tordesillas_etal_ToR2021}.

In this paper, we propose a novel provably correct feedback motion planning framework that extends safety and navigation properties of low-order (e.g. position-/velocity-controlled) reference motion planners to high-order (e.g., acceleration-/jerk-controlled) robot dynamics using a bidirectional interface between high-level planning and low-level control based on a reference governor and safety assessment of predicted robot motion, as illustrated in \mbox{\reffig{fig.general_framework}}.

\begin{figure}[t]   
\centering
\vspace{-0.5mm}
\includegraphics[width = \columnwidth]{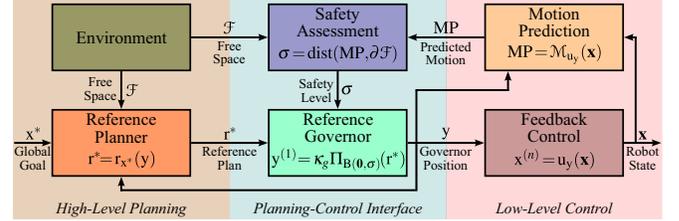}
\vspace{-4.5mm}
\caption{Safe feedback motion planning using motion prediction and reference governor enables a bidirectional safety interface between high-level planning and low-level control.}
\label{fig.general_framework}
\vspace{-5mm}
\end{figure}

\vspace{-1mm}

\subsection{Motivation and Relevant Literature}

Safe and smooth robot motion is essential for bringing robots from factories to our everyday lives, because  
autonomous robot operation around people requires safety guarantees, and jerky robot motion does not only cause system failures and malfunctions, but also causes discomfort for people. 
Most existing safe and smooth motion planning methods follow a two-step approach: first find a simple piecewise linear path using a standard off-the-shelf path planner \cite{lavalle_PlanningAlgorithms2006}, and then perform path smoothing and time parametrization using trajectory optimization in order to satisfy safety and system requirements \cite{liu_etal_RAL2017, richter_etal_ISRR016, ding_etal_RAL2019}.
Unfortunately, the open-loop nature of such trajectory planning methods often requires many replanning cycles to ensure safety by closing the planning loop \cite{tordesillas_etal_ToR2021, ding_gao_wang_shen_TRO2019}. 
In this paper, we introduce a new feedback motion planning framework for safe and smooth robot motion design by establishing a continuous bidirectional safety interface between high-level reference planing and low-level motion control via motion prediction and reference governors.

Reference governors are originally designed for constrained control of dynamical systems to separately handle the issues of stability and constraint satisfaction  \cite{bemporad_TAC1998, gilbert_kolmanovsky_Automatica2002, garone_nicotra_TAC2015}. 
In robotics, reference governors are applied for safe robot navigation to separately address global navigation, safety, and stability, and safety requirements at different stages by high-level planning and low-level control \cite{arslan_koditschek_ICRA2017}.
Reference governors are successfully demonstrated for safe navigation of second-order robotic systems using total energy \cite{arslan_koditschek_ICRA2017} and directed Lyapunov ellipsoids  \cite{li_ICRA2020, li_2020}.
These existing approaches indirectly suggest that motion prediction and safety assessment is fundamental in governed robot motion design in their very specific constructions. 
In this paper we identify and describe generic fundamental building blocks for governed robot motion design, provide new example constructions, and systematically investigate the role of motion prediction and system order on robot motion.
At a more conceptual level, our newly introduced notion of motion range prediction allows one to effectively determine (inevitable) collision states of  high-order robotic systems which is a known bottleneck in kinodynamic motion planning  \cite{fraichard_asama_AR2004}.

\subsection{Contributions and Organization of the Paper}

This paper introduces a new feedback motion planning framework that closes the gap between traditionally separately studied high-level planning and low-level control by establishing a bidirectional planning-control interface based on motion prediction, safety assessment, and reference governors.
In \refsec{sec.GeneralFramework}, we present fundamental motion design elements of our motion planning framework, describe its working principles, and analyze its stability and safety properties.
In \refsec{sec.ExampleMotionControlPrediction}, we provide example constructions for low-level motion control and prediction, and high-level reference planning, and demonstrate the use of Lyapunov ellipsoids and Vandermonde simplexes for robot motion prediction under PhD control. 
In \refsec{sec.NumericalSimulations}, we systematically investigate the role of motion prediction on robot motion in numerical simulations.  
We conclude in \refsec{sec.Conclusions} with a summary of our contributions and future work.

\section{From Low to High Order Motion Planners:\\ Problem Formulation}
\label{sec.ProblemFormulation}

For ease of exposition, we consider a disk-shaped robot of radius $\radius \in \R_{>0}$, centered at position $\pos \in \workspace$,  operating in a known static closed compact environment $\workspace \subseteq \R^{\dimspace}$ that is cluttered with a collection of obstacles represented by an open set $\obstspace \subset \R^{\dimspace}$, where $\dimspace \geq 2$. 
Hence, the robot's free space, denoted by $\freespace$, of collision-free positions is given by
\begin{align} \label{eq.FreeSpace}
\freespace \ldf \clist{ \pos \in \workspace \, \big| \,   \ball(\pos,\radius) \subseteq \workspace \setminus \obstspace },
\end{align}
where $\ball(\pos,\radius) \ldf \clist{ \vect{y} \in \R^{\dimspace} \big|  \norm{\vect{y} - \pos } \leq \radius} $ is the Euclidean closed ball centered at $\pos$ with radius $\radius$, and  $\norm{.}$ denotes the standard Euclidean norm for both vectors and matrices. 
To ensure global navigation between any start and goal positions in $\freespace$, we assume that the free space $\freespace$ is path-connected.

We also assume that the robot motion is fully actuated and described by an $\order^{\text{th}}$-order differential equation of the form\,\footnote{For example, one can consider a complex nonlinear dynamical system controlled via feedback linearization \cite{khalil_NonlinearSystems2001}.}  
\begin{align}
\pos^{(\order)} =  \ctrl(\state) = \ctrl(\pos^{(0)}, \pos^{(1)}, \ldots, \pos^{(\order-1)}),
\end{align}
where $\pos^{(k)} = \frac{\deriv^{k} }{\deriv t ^{k}} \pos$ denotes the $k^{\text{th}}$ time derivative of the robot position $\pos$, and $\ctrl: \plist{\R^{\dimspace}}^{\order} \rightarrow \R^{\dimspace}$ is a feedback  control policy, and  $\state = (\pos^{(0)}, \pos^{(1)}, \ldots, \pos^{(\order-1)}) \in \plist{\R^{\dimspace}}^{\order}$  denotes  the robot's dynamical state consisting of its position, velocity, and so on. 
Accordingly, one can determine if the robot is in motion or not as follows:
\begin{definition}\label{def.ZeroMotion} 
(\emph{Zero Motion})
 An $\order^{\text{th}}$-order robot is said to be in the \emph{zero-motion} state iff its velocity and higher-order variables are zero, i.e., $\pos^{(k)} = \mat{0}$ for all $k = 1, \ldots, \order -1$, where $\mat{0}$ is a vector of all zeros of the appropriate size.
Hence, any robot state  $\state \in \R^{\dimspace} \times \clist{\mat{0}}^{\order-1}$ is a zero-motion state.
\end{definition} 
\noindent For example, for the second-order dynamics, zero-motion corresponds to zero-velocity.
Note that zero-motion is different than being stationary which means, in addition to the condition for zero motion, $\ctrl(\state) = \mat{0}$.

For the simple fully-actuated kinematic (position- and velocity-controlled) robot model, the global navigation problem  can be solved effectively using off-the-shelf motion planning algorithms \cite{choset_etal_PrinciplesOfRobotMotion2005, lavalle_PlanningAlgorithms2006}.
However, kinodynamic motion planning of complex high-order robotic systems is an open research challenge because collision and safety verification of highly dynamical robotic systems is hard \cite{lavalle_kuffner_IJRR2001}. 
As a new approach for closing the gap between kinematic and kinodynamic motion planning, we consider  \emph{provably-correct extensions of feedback motion planners} from simple (i.e., first-order) to complex (i.e., $n^\text{th}$-order) robot dynamics that preserve and augment stability and safety properties: 
\begin{MotionPlannerExtension}
Given a Lipschitz continuous vector field planner $\refplan_{\goal}: \refdomain \rightarrow \R^{\dimspace}$ for the first-order robot dynamics
\begin{align}\label{eq.ReferenceDynamics}
\pos^{(1)} = \refplan_{\goal}(\pos),
\end{align} 
that  asymptotically steers all robot positions in its positively invariant collision-free domain $\refdomain \subseteq \freespace$  towards a safe goal location $\goal \in \refdomain$,  
a correct extension of the reference motion planner $\refplan_{\goal}$  for  the $n^{\text{th}}$-order  dynamics is a construction of a feedback motion planner $\ctrl_{\refplan_{\goal}}: \plist{\R^{\dimspace}}^{\order} \rightarrow \R^{\dimspace}$ of the form 
\begin{align}
\pos^{(n)} = \ctrl_{\refplan_{\goal}}(\state) = \ctrl_{\refplan_{\goal}}\plist{\pos^{0}, \pos^{(1)}, \ldots, \pos^{(n-1)}},
\end{align}
that asymptotically brings all collision-free zero-motion initial robot states\,\footnote{This requirement on the initial robot states can be extended to include other collision-free robot states as defined in \refcor{cor.CollisionFreeState}.} in $\refdomain \times \clist{\mat{0}} ^{n-1}$ to the goal location $\goal$ while ensuring no collision along the way, i.e., the robot trajectory $\pos(t)$ stays in the free space $\freespace$ for all future times $t \geq 0$. 
\end{MotionPlannerExtension}

Below, we introduce a general framework with examples for provably correct extensions of feedback motion planners from low-order to high-order robot dynamics using reference governors and risk assessment of predicted robot motion.

\section{From Low to High Order Motion Planners: General Framework}
\label{sec.GeneralFramework}

In this section, we describe the generic building blocks and the working principle of our motion planning framework for extending the applicability of a first-order reference planner to the $\order^{\text{th}}$-order robot dynamics that are stabilized by a standard feedback motion control policy.
We realize a bidirectional interface between the high-level reference motion planner and the low-level feedback motion control via a reference governor system using motion range prediction and safety assessment, as illustrated in \reffig{fig.general_framework}. 
We also analyze important stability and safety properties of our general extension framework to demonstrate its correctness.

\subsection{Fundamental Motion Planning Elements}

Our motion planning framework consists of five building elements: reference motion planner, feedback motion control, motion prediction, safety assessment, and reference governor, whose specific roles and requirements are presented below.

\subsubsection{Reference Motion Planner} 
\label{sec.ReferenceMotionPlanner}

A \emph{reference motion planner} for the first-order and fully-actuated robot dynamics in \refeq{eq.ReferenceDynamics} is a Lipschitz continuous vector field planner $\refplan_{\goal}: \refdomain \rightarrow \R^{\dimspace}$ associated with a goal location $\goal \in \refdomain$ in  its collision-free domain $\refdomain \subseteq \freespace$ such that 
\begin{itemize}
\item (Positive Invariance \& Boundary Avoidance\footnote{The boundary avoidance property is required for ensuring that the robot-governor system  can asymptotically follow a reference motion planner without getting stuck along the way (see the proof of \refprop{prop.Stability}).
}) $\refplan_{\goal}$ is inward pointing on the boundary $\partial \refdomain$ of its domain $\refdomain$.
\item (Global Asymptotic Stability) $\refplan_{\goal}$ has a unique stable point at $\goal$ whose domain of attraction includes $\refdomain$.
\end{itemize}
In other words, for the velocity-controlled robot model, the reference motion planner $\refplan_{\goal}$ asymptotically brings all robot positions in its positively-invariant domain $\refdomain$  towards the  goal $\goal$ while avoiding its boundary and so collisions \cite{arslan_koditschek_ICRA2017}.

\subsubsection{Feedback Motion Control}\label{sec.FeedbackMotionControl}

A feedback motion control policy $\ctrl_{\ctrlgoal}:\plist{\R^{\dimspace}}^{\order} \rightarrow \R^{\dimspace}$ for the $\order^{\text{th}}$-order robot dynamics 
\begin{align}\label{eq.FeedbackMotionControl}
\pos^{(\order)} = \ctrl_{\ctrlgoal}(\state) = \ctrl_{\ctrlgoal}(\pos^{(0)}, \pos^{(1)}, \ldots, \pos^{(\order-1)}),
\end{align}
is a Lipschitz continuous controller that is parametrized with a desired robot position $\ctrlgoal \in \R^{\dimspace}$ at which the closed-loop system is  globally asymptotically stable  with zero-motion (\refdef{def.ZeroMotion}), i.e.,  $\ctrl_{\ctrlgoal}(\ctrlgoal, \mat{0}, \ldots, \mat{0})  = \mat{0}$.
As a controller choice, one can consider any  standard feedback motion controller as long as its motion range can be accurately bounded in terms of the robot state $\state$ and the goal $\ctrlgoal$,  as discussed next.

\subsubsection{Motion Range Prediction}\label{sec.motion_range_prediction}

For the $\order^{\text{th}}$-order robot model that moves  towards a given goal $\ctrlgoal \in \R^{\dimspace}$  under the feedback motion control $\ctrl_{\ctrlgoal}$, a \textit{motion range prediction}, denoted by $\motionrange_{\ctrl_{\ctrlgoal}}(\state)$, is a closed set that bounds the robot  motion trajectory $\pos(t)$ starting at $t=0$ from an initial state $\state \in (\R^{\dimspace})^{\order}$ for all future times $t \geq 0$, i.e.,
\begin{align} \label{eq.motion_range_prediction_description}
\pos(t) \in \motionrange_{\ctrl_{\ctrlgoal}}(\state) \quad \forall t \geq 0,
\end{align} 
and is bounded by a Euclidean ball centered at $\ctrlgoal$  as
\begin{align}\label{eq.motion_range_bound_requirement}
\motionrange_{\ctrl_{\ctrlgoal}}(\state) \subseteq \ball(\ctrlgoal, \motionrangebound \norm{\state - \plist{ \ctrlgoal, \mat{0}, \ldots, \mat{0} }}) \quad \forall \state \in \plist{\R^{d}}^{\order},
\end{align} 
where $\motionrangebound \in \R_{> 0}$ is a fixed positive constant.
Note that the bound on motion range prediction in \refeq{eq.motion_range_bound_requirement} implies that the motion prediction asymptotically converges to a single point at $\ctrlgoal$ since the feedback motion control $\ctrl_{\ctrlgoal}$ is globally asymptotically stablizing the zero-motion position $\ctrlgoal$, i.e.,
\begin{align}
\lim_{t \rightarrow \infty} \motionrange_{\ctrl_{\ctrlgoal}}(\state) = \clist{\ctrlgoal}.
\end{align}
Hence, a bounded motion range prediction in \refeq{eq.motion_range_bound_requirement} ensures that the (collision) distance of the motion range prediction $\motionrange_{\ctrl_{\ctrlgoal}}(\state)$ to the free space boundary  $\partial \freespace$ might be zero only for a finite amount of time for any goal $\ctrlgoal \in \mathring{\freespace}$ in the free space interior $\mathring{\freespace}$, which is crucial to avoid undesired critical points, as later discussed in the proof of \refprop{prop.Stability}.

In general, determining collision-free states of high-order dynamical systems is a hard task in kinodynamic motion planning \cite{lavalle_kuffner_IJRR2001}. 
The availability of a motion range prediction allows one to effectively identify certain collision-free states of $n^\text{th}$-order dynamical systems because having motion range prediction in the free space implies safe robot motion, i.e.,
\begin{align}
\motionrange_{\ctrl_{\ctrlgoal}}(\state) \subseteq \freespace \Longrightarrow \pos(t) \in \freespace \quad \forall t \geq 0.
\end{align}
\begin{corollary} \label{cor.CollisionFreeState}
A robot state $\state \in \plist{\R^{\dimspace}}^{\order}$ is collision free under the control policy $\ctrl_{\ctrlgoal}$ if there exists a goal position $\ctrlgoal \in \freespace$ relative to which the associated motion range prediction is in the free space, i.e., $\motionrange_{\ctrl_{\ctrlgoal}} (\state) \subseteq \freespace$.
\end{corollary}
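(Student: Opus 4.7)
The plan is to observe that this corollary is essentially an immediate restatement of the containment property of motion range prediction established in \refeq{eq.motion_range_prediction_description}, now interpreted from the standpoint of collision freeness. So the proof is very short and mostly a chain of set inclusions, with no real calculational obstacle.

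First I would fix the hypotheses: let $\state \in \plist{\R^{\dimspace}}^{\order}$ be the given initial robot state, let $\ctrlgoal \in \freespace$ be a goal for which the hypothesis $\motionrange_{\ctrl_{\ctrlgoal}}(\state) \subseteq \freespace$ holds, and consider the closed-loop trajectory $\pos(t)$, $t\geq 0$, generated by the $\order^{\text{th}}$-order dynamics \refeq{eq.FeedbackMotionControl} with the controller $\ctrl_{\ctrlgoal}$ and initial condition $\state$.

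Next I would invoke the defining property \refeq{eq.motion_range_prediction_description} of the motion range prediction: by construction, $\pos(t) \in \motionrange_{\ctrl_{\ctrlgoal}}(\state)$ for every $t \geq 0$. Combining this with the hypothesis $\motionrange_{\ctrl_{\ctrlgoal}}(\state) \subseteq \freespace$ yields $\pos(t) \in \freespace$ for all $t \geq 0$, which by the definition of $\freespace$ in \refeq{eq.FreeSpace} means that the robot disk $\ball(\pos(t),\radius)$ remains inside $\workspace\setminus\obstspace$ throughout its evolution. This is exactly the notion of collision freeness used in the problem formulation.

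Finally, to close the argument I would just remark that the implication used here is the same one already highlighted in the display immediately preceding the corollary statement, so the corollary is simply a formalization of that observation for later reference (in particular, it supplies the generalized set of safe initial conditions mentioned in the footnote of the problem formulation). The only place one has to be a bit careful is to make sure that the trajectory $\pos(t)$ referenced in \refeq{eq.motion_range_prediction_description} is indeed the one produced by the controller $\ctrl_{\ctrlgoal}$ whose motion range is invoked in the hypothesis; this is guaranteed by the notational convention in \refsec{sec.motion_range_prediction}, so no additional obstacle arises.
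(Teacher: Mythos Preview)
Your proposal is correct and matches the paper's treatment: the paper states the corollary without proof, as it is an immediate consequence of the displayed implication $\motionrange_{\ctrl_{\ctrlgoal}}(\state) \subseteq \freespace \Longrightarrow \pos(t) \in \freespace$ for all $t\geq 0$, which in turn follows directly from the defining containment \refeq{eq.motion_range_prediction_description}. Your unpacking of this chain of inclusions is exactly the intended reasoning.
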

\noindent Accordingly, we shall continuously monitor the safety of the closed-loop robot motion with respect to its goal position by measuring the distance of the predicted motion range to the free space boundary as described below.

\subsubsection{Safety Assessment}

Given a motion range prediction $\motionrange_{\ctrl_{\ctrlgoal}}(\state)$ for the closed-loop robot dynamics in \refeq{eq.FeedbackMotionControl}, the  safety level of the robot's trajectory $\pos(t)$ starting from an initial state $\state = (\pos^{(0)}, \pos^{(1)}, \ldots, \pos^{(\order-1)}) \in \plist{\R^{\dimspace}}^{\order}$ towards a given goal $\ctrlgoal \in \R^{\dimspace}$ is defined  as the minimum distance between the predicted motion range and the free space boundary as
\begin{align}\label{eq.SafetyLevel}
\safelevel(\state, \ctrlgoal) &:= \safedist(\motionrange_{\ctrl_{\ctrlgoal}}(\state), \partial \freespace),  \\
&:= 
\left \{
\begin{array}{@{}c@{\,\,}l}
\min\limits_{\substack{\vect{a} \in \motionrange_{\ctrl_{\ctrlgoal}}(\state) \\  \vect{b} \in \partial\freespace}} \norm{\vect{a} - \vect{b}} & \text{, if } \pos^{(0)} \in \freespace,  
\\
0 &\text{, otherwise,}
\end{array}
\right .
\end{align} 
where $\partial \freespace$ denotes the boundary of the free space $\freespace$.
Here, a safety level of zero means unsafe motion; and the higher the safety level is the safer the motion. 
Note that we consider being exactly on the boundary of the free space to be unsafe although it is, by definition in \refeq{eq.FreeSpace}, free of collisions.

A critical requirement of the safety level measure for our stability analysis performed below is that $\safelevel(\state, \ctrlgoal)$ is a locally Lipschitz continuous function of the robot state $\state$ and the goal $\ctrlgoal$. 
For example,  $\safelevel(\state, \ctrlgoal)$ is locally Lipschitz continuous if the motion range prediction can be expressed as an affine transformation of some fixed sets (e.g., the unit ball/simplex) based on a smooth function of the robot state $\state$ and goal $\ctrlgoal$.

\begin{lemma} \label{lem.DistanceLipschitz}
(\emph{Lipschitz Continuity of Minimum Set Distance})
The minimum distance between two compact sets under an affine transformation is Lipschitz continuous wrt. the affine transformation parameters, i.e., for any $X \! \subset \! \R^{m}$ and $Y \!\subset \!\R^{n}$
{\small
\begin{align}
\absval{\safedist \plist{f_{\mat{A}, \vect{b}}(X), Y} \!-\!  \safedist \plist{f_{\mat{A}, \vect{b}'}(X), Y}} &  \leq \norm{\vect{b} - \vect{b}'}, \label{eq.DistanceLipschitz_b}
\\
\absval{\safedist \plist{f_{\mat{A}, \vect{b}}(X), Y} \!-\! \safedist \plist{f_{\mat{A}', \vect{b}}(X), Y}} &  \leq \norm{\mat{A} \!-\! \mat{A}'} \max_{\vect{x} \in X}\norm{\vect{x}}, \label{eq.DistanceLipschitz_A}
\end{align}
}%
where $\safedist(X,Y) = \min_{\vect{x} \in X, \vect{y} \in Y } \norm{\vect{x} - \vect{y}}$ is the minimum set distance, and  $f_{\mat{A},\vect{b}}(\vect{x}) \ldf \mat{A} \vect{x} + \vect{b}$ is an affine transformation parametrized by $\mat{A} \in \R^{n \times m}$ and $\vect{b} \in \R^{n}$.
\end{lemma}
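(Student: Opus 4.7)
The plan is to prove each inequality by a symmetric feasibility argument: take the minimizing pair for one of the two set-distance expressions, then plug it into the other as a (suboptimal) candidate, and bound the resulting excess by the triangle inequality. Since $X$ and $Y$ are compact and $f_{\mat{A},\vect{b}}$ is continuous, the transformed image $f_{\mat{A},\vect{b}}(X)$ is compact as well, so the infimum defining $\safedist$ is attained by some $\vect{x}^{*} \in X$ and $\vect{y}^{*} \in Y$; this guarantees the feasibility argument is legitimate.

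For \refeq{eq.DistanceLipschitz_b}, I would let $(\vect{x}^{*}, \vect{y}^{*})$ achieve $\safedist(f_{\mat{A},\vect{b}}(X), Y) = \norm{\mat{A}\vect{x}^{*} + \vect{b} - \vect{y}^{*}}$. Then since $\mat{A}\vect{x}^{*} + \vect{b}' \in f_{\mat{A},\vect{b}'}(X)$ and $\vect{y}^{*} \in Y$,
\begin{align}
\safedist(f_{\mat{A},\vect{b}'}(X), Y) \leq \norm{\mat{A}\vect{x}^{*} + \vect{b}' - \vect{y}^{*}} \leq \norm{\mat{A}\vect{x}^{*} + \vect{b} - \vect{y}^{*}} + \norm{\vect{b} - \vect{b}'}, \nonumber
\end{align}
by the triangle inequality, so $\safedist(f_{\mat{A},\vect{b}'}(X), Y) - \safedist(f_{\mat{A},\vect{b}}(X), Y) \leq \norm{\vect{b} - \vect{b}'}$. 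Swapping the roles of $\vect{b}$ and $\vect{b}'$ yields the same bound in the opposite direction, giving the absolute value inequality.

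For \refeq{eq.DistanceLipschitz_A}, the argument is structurally identical but uses the operator-norm inequality $\norm{(\mat{A}' - \mat{A})\vect{x}^{*}} \leq \norm{\mat{A} - \mat{A}'}\norm{\vect{x}^{*}} \leq \norm{\mat{A} - \mat{A}'}\max_{\vect{x} \in X}\norm{\vect{x}}$ in place of the simple translation bound. The maximum exists because $X$ is compact and $\norm{\cdot}$ is continuous. Again a symmetric swap of $\mat{A}$ and $\mat{A}'$ closes the absolute value.

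There is no real obstacle here; the only subtlety is being careful that the distance is measured from a set we can actually substitute into, which is why we transport the minimizer $\vect{x}^{*} \in X$ (not the image point) across the two affine maps. The compactness assumptions on $X$ and $Y$ are only needed to guarantee attainment of the minimum; an infimum-based argument with $\varepsilon$-minimizers would give the same conclusion without compactness of $Y$.
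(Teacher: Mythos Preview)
Your proof is correct and follows essentially the same approach as the paper's: both arguments add and subtract the perturbed parameter inside the norm, apply the triangle inequality (and, for \refeq{eq.DistanceLipschitz_A}, submultiplicativity of the matrix norm), and then invoke symmetry to obtain the absolute-value bound. The only cosmetic difference is that the paper carries the $\min$ operator through the chain of inequalities directly, whereas you first extract an explicit minimizing pair $(\vect{x}^{*},\vect{y}^{*})$ and use it as a feasible point for the other distance; the underlying logic is identical.
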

\begin{proof}
See \refapp{app.DistanceLipschitz}.
\end{proof}

\subsubsection{Reference Governor}

A \textit{reference governor} is a first-order dynamical system with a position state $\govpos \in \R^{\dimspace}$ that follows a reference motion planner $\refplan_{\goal}: \refdomain \rightarrow \R^{\dimspace}$ towards a goal position $\goal \in \refdomain \subseteq \freespace$ as close as possible, based on the safety level $\safelevel(\state, \govpos) $ of the predicted robot motion  starting from state $\state \in \plist{\R^{d}}^\order$ towards the governor position $\govpos$.
We design the reference governor dynamics as follows:
\begin{subequations}\label{eq.ReferenceGovernor}
\begin{align} 
\govpos^{(1)} & = \govgain \proj_{\ball(\mat{0}, \safelevel(\state, \govpos))}(\refplan_{\goal}(\govpos)),  \label{eq.ReferenceGovernorKinematics}
\\ 
& = - \govgain \plist{\govpos - \proj_{\ball(\govpos, \safelevel(\state, \govpos))}(\govpos + \refplan_{\goal}(\govpos))}, 
\\
& = \govgain \min\plist{\safelevel(\state, \govpos), \norm{\refplan_{\goal}(\govpos)}}\frac{\refplan_{\goal}(\govpos)}{\norm{\refplan_{\goal}(\govpos)}}, \label{eq.ReferenceGovernorScaledDynamics}
\end{align}  
\end{subequations}
where $\govgain > 0$ is a fixed control gain for the governor, $\proj_{A}(\vect{b}) \ldf \argmin_{\vect{a} \in A} \norm{\vect{a} - \vect{b}}$ denotes the  metric projection of a point $\vect{b}$ onto a closed set $A$, and $\ball(\mat{0}, \sigma)$ is the Euclidean ball centered at the origin with radius $\sigma \geq 0$.
This design ensures that the governor is only allowed to move according to the reference planner $\refplan_{\goal}$ if the robot's motion relative to the governor is predicted to be safe, i.e., $\safelevel(\state, \govpos) = \safedist(\motionrange_{\ctrl_{\ctrlgoal}}(\state), \partial \freespace) > 0$.
Also note that the right-hand side of the reference governor dynamics in \refeq{eq.ReferenceGovernorScaledDynamics} is Lipschitz continuous since both the safety level $\safelevel(\state, \govpos)$ and the reference planner $\refplan_{\goal}(\govpos)$ are assumed to be Lipschitz.

\subsection{Working Principle of High Order Motion Planners}
\label{sec.GeneralWorkingPrinciple}

Our feedback motion planning framework consists of a low-level inner control loop and  a high-level outer  planning loop that bidirectionally interact with each other via their shared element --- reference governor, see \reffig{fig.general_framework}, to transfer the global navigation  properties of the first-order  reference motion planner to the $n^{\text{th}}$-order robot dynamics while ensuring  safety and stability.
The working principles of these control and planning loops and their interaction can be conceptually summarized as follows. 
\begin{itemize}
\item Low-Level Inner Control Loop: At the low level, the feedback motion control $\ctrl_{\govpos}(\state)$ constantly tries to stabilize the robot state $\state$ at the (potentially changing) governor position $\govpos$ with zero-motion, while the safety level $\safelevel(\state, \govpos)$ of the resulting robot motion is continuously monitored using the predicted robot motion range $\motionrange_{\ctrl_{\govpos}}(\state)$ with respect to the governor position $\govpos$.

\item High-Level Outer Planning Loop: At the high level, the reference governor tries to follow the reference motion planner $\refplan_{\goal}$ as close as possible, based on  the safety level  $\safelevel(\state, \govpos)$ of the current robot state $\state$ relative to the governor position $\govpos$, so that the governor asymptotically reaches to the global goal $\goal$ of the reference plan while ensuring safe robot motion. 

\item Low \& High Level Control and Planning Interaction: Hence, the reference governor defines a bidirectional interface between the inner control loop and the outer planning loop such that the reference motion plan is approximately transferred from the outer planning loop to the inner control loop while ensuring the safety and stability of the entire robot-governor system.   
\end{itemize}  

\noindent Finally, it is convenient to  write the overall dynamics of the robot-governor system using the closed-loop robot dynamics in \refeq{eq.FeedbackMotionControl} and the governor dynamics in \refeq{eq.ReferenceGovernor}  as
\begin{subequations}\label{eq.RobotGovernorDynamics}
\begin{align}
\pos^{(\order)} &= \ctrl_{\govpos}\plist{\state},  \label{eq.RobotDynamicsFinal}
\\
\govpos^{(1)} &=  \govgain \proj_{\ball(\mat{0}, \safelevel(\state, \govpos))}\plist{\refplan_{\goal}(\govpos)}, \label{eq.GovernorDynamicsFinal}
\end{align}
\end{subequations}
where $\refplan_{\goal}$ is a first-order reference dynamics and $\safelevel(\state, \govpos)$ is the robot's safely level relative to the governor as defined in \refeq{eq.SafetyLevel}. 
This clearly shows the strong coupling between the robot and the governor.

\subsection{Safety and Stability Properties}
\label{sec.GeneralCorrectness}

In this part, to prove the correctness of our motion planning framework, we first show that the robot-governor dynamics in \refeq{eq.RobotGovernorDynamics} result in a safe motion for both the $\order^{\text{th}}$-order robot and the first-order governor, and 
then show that they both asymptotically converge to zero-motion at the global goal $\goal$ by  following the first-order reference motion planner.    
\begin{proposition}\label{prop.Safety}
\emph{(Safety)}
Starting at $t = 0$ from any collision-free robot state $\state(0) \in \plist{\R^{\dimspace}}^{\order}$ relative to a collision-free governor position $\govpos(0) \!\in\! \refdomain \subseteq \freespace$ in the sense of \refcor{cor.CollisionFreeState}, under the robot-governor dynamics in \refeq{eq.RobotGovernorDynamics}, the robot's trajectory $\pos(t)$ and the governor trajectory $\govpos (t)$ stays collision free in $\freespace$  and $\refdomain$, \mbox{respectively, for all future times $t \geq 0$, i.e.,}
\begin{align}
\left.
\begin{array}{@{}c@{}}
\motionrange_{\ctrl_{\govpos}}\plist{\state(0)} \subseteq \freespace, 
\\
\govpos(0) \in \refdomain \subseteq \freespace
\end{array}
\right.
\,\,\, \Longrightarrow \,\,\,
\left .
\begin{array}{@{}c@{}}
\pos(t) \in \freespace,
\\
\govpos(t) \in \refdomain  \subseteq \freespace 
\end{array} 
\quad \forall t \geq 0.
\right .
\end{align}
\end{proposition}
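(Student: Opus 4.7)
The plan is to decouple the statement into governor safety ($\govpos(t) \in \refdomain$) and robot safety ($\pos(t) \in \freespace$). For the governor, the crucial observation from \refeq{eq.ReferenceGovernorScaledDynamics} is that $\govpos^{(1)}$ is a nonnegative scalar multiple of $\refplan_{\goal}(\govpos)$. Since $\refplan_{\goal}$ is inward-pointing on $\partial \refdomain$ by the boundary-avoidance property, so is any nonnegative scaling, and a standard Nagumo-type tangent-cone argument renders $\refdomain$ positively invariant under the governor dynamics. Hence, $\govpos(0) \in \refdomain$ gives $\govpos(t) \in \refdomain \subseteq \freespace$ for all $t \geq 0$.

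For the robot, I would prove the stronger invariant $\motionrange_{\ctrl_{\govpos(t)}}(\state(t)) \subseteq \freespace$ for all $t \geq 0$, which yields $\pos(t) \in \freespace$ since $\pos(t)$ lies in the motion range by \refeq{eq.motion_range_prediction_description}. Let $\sigma(t)$ denote the time-indexed safety level along the coupled trajectory, which is continuous in $t$ thanks to \reflem{lem.DistanceLipschitz} and the continuity of $\state(t), \govpos(t)$. I would argue by contradiction: suppose $\tau := \inf\{t > 0 : \motionrange_{\ctrl_{\govpos(t)}}(\state(t)) \not\subseteq \freespace\}$ is finite. Since $\freespace$ is closed and motion ranges depend continuously on $(\state, \govpos)$ via an affine parametrization, the Hausdorff limit of in-$\freespace$ sets lies in $\freespace$, giving $\motionrange_{\ctrl_{\govpos(\tau)}}(\state(\tau)) \subseteq \freespace$ and $\sigma(\tau) \geq 0$. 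If $\sigma(\tau) > 0$, Lipschitz continuity extends the inclusion into a right neighborhood of $\tau$, contradicting the definition of $\tau$. If $\sigma(\tau) = 0$, the governor dynamics yield $\govpos^{(1)}(\tau) = 0$, so the goal is momentarily frozen; the bound \refeq{eq.motion_range_bound_requirement} forces $\motionrange_{\ctrl_{\govpos(\tau)}}(\state(t))$ to contract toward $\{\govpos(\tau)\} \subseteq \freespace$ as $t$ advances, and the Lipschitz deviation between this and the actual $\motionrange_{\ctrl_{\govpos(t)}}(\state(t))$ (bounded via \reflem{lem.DistanceLipschitz} against $\int_\tau^t \govgain \sigma(s) \, \deriv s$) keeps the latter in $\freespace$ on a small interval $t > \tau$, again contradicting the definition of $\tau$.

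The main obstacle is the $\sigma(\tau) = 0$ case, where one must carefully balance the throttling of the governor's speed by $\govgain \sigma$ against any inward deformation of the motion range in Hausdorff distance. A cleaner alternative I would pursue is to show that $\{t \geq 0 : \motionrange_{\ctrl_{\govpos(t)}}(\state(t)) \subseteq \freespace\}$ is both relatively closed (by Hausdorff limit and closedness of $\freespace$) and relatively open (by the case analysis above) in $[0, \infty)$, and conclude by connectedness of $[0, \infty)$ that this set is all of $[0, \infty)$, which combined with $\pos(t) \in \motionrange_{\ctrl_{\govpos(t)}}(\state(t))$ yields the claim.
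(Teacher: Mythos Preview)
Your treatment of governor safety matches the paper's: both use that the governor dynamics is a nonnegative scaling of $\refplan_{\goal}$, so positive invariance of $\refdomain$ carries over.

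For robot safety, however, your target invariant $\motionrange_{\ctrl_{\govpos(t)}}(\state(t)) \subseteq \freespace$ for all $t\geq 0$ is too strong in the general framework and need not hold --- the paper remarks explicitly right after the proof that ``the motion range prediction is not necessarily in the free space for all times.'' The gap lies in your $\sigma(\tau)=0$ case: you claim the frozen-goal motion range $\motionrange_{\ctrl_{\govpos(\tau)}}(\state(t))$ contracts for $t>\tau$, but neither \refeq{eq.motion_range_prediction_description} nor the ball bound \refeq{eq.motion_range_bound_requirement} implies any nesting or monotonicity of $\motionrange$ along the closed-loop trajectory. The bound \refeq{eq.motion_range_bound_requirement} only gives $\motionrange_{\ctrl_{\govpos}}(\state)\subseteq\ball(\govpos,\motionrangebound\norm{\state-\govstate})$, and $\norm{\state(t)-\govstate(\tau)}$ can increase transiently even under a globally asymptotically stable controller; the motion range may therefore protrude past $\partial\freespace$ immediately after $\tau$ while $\pos(t)$ itself stays safely inside. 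Your Gronwall-type control on the governor displacement via $\int_\tau^t \govgain\,\sigma(s)\,\deriv s$ does not help, since the issue already arises with the governor perfectly frozen. The clopen alternative suffers from the same defect, because relative openness at a boundary-touching time $\tau$ is exactly what fails.

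The paper avoids this entirely by not tracking the time-varying motion range. It invokes \refeq{eq.motion_range_prediction_description} directly: whenever the governor is stationary on an interval with left endpoint $t_0$ (so $\govpos(t)\equiv\govpos(t_0)$ there), the robot trajectory on that interval is precisely the closed-loop trajectory from initial state $\state(t_0)$ toward the fixed goal $\govpos(t_0)$, hence $\pos(t)\in\motionrange_{\ctrl_{\govpos(t_0)}}(\state(t_0))$ for all such $t$. That single frozen motion range lies in $\freespace$ either by hypothesis (if $t_0=0$) or by continuity from the preceding moving phase where $\sigma>0$. On moving phases, $\sigma(t)>0$ gives $\pos(t)\in\motionrange_{\ctrl_{\govpos(t)}}(\state(t))\subset\mathring{\freespace}$ directly. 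Thus $\pos(t)\in\freespace$ throughout, without ever asserting your stronger invariant.
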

\begin{proof}
The safety of the governor follows from the positive invariance of the collision-free domain $\refdomain$ of the reference planner $\refplanner_{\goal}$ because the governor dynamics in \refeq{eq.ReferenceGovernorScaledDynamics} equal to a non-negatively scaled version of the reference plan $\refplanner_{\goal}$ which ensures that $\refdomain$ is also positively invariant under  \refeq{eq.GovernorDynamicsFinal}. 

The safety of the robot relative to the governor (see \refcor{cor.CollisionFreeState}) can be observed in two steps. If the governor moves, i.e., $\govpos^{(1)}(t) \neq \mat{0}$ in \refeq{eq.GovernorDynamicsFinal}, then the positive safety level $\safelevel(\state(t),\govpos(t)) > 0$ implies  $\freespace\supset \motionrange_{\ctrl_{\govpos(t)}}\plist{\state(t)} \ni \pos(t)$.
Otherwise (i.e., if the governor is stationary), the safety of the robot trajectory is verified either at the start at $t = 0$ or just before the governor becomes stationary where we have $\safelevel(\state(t),\govpos(t)) = 0$, but $\motionrange_{\ctrl_{\govpos(t)}}\plist{\state(t)} \subseteq \freespace$. 
Hence, we have by definition \refeq{eq.motion_range_prediction_description} that $\pos(t) \in \motionrange_{\ctrl_{\govpos}}(\state(0))$ for all $t \geq 0$.  
\end{proof}
\noindent Although the motion range prediction is not necessarily in the free space for all times, it is useful to observe the motion prediction is inside the interior $\mathring{\freespace}$ of the free space when the governor moves since the safety level is strictly positive, i.e.,
{\small
\begin{align}\label{eq.PositiveSafetyLevel}
\govpos^{(1)}(t)  \neq \mat{0} \Longleftrightarrow \safelevel(\state(t), \govpos(t)) > 0 \Longleftrightarrow \motionrange_{\ctrl_{\govpos(t)}}(\state(t)) \subset \mathring{\freespace}. \!\!
\end{align}
}%

\begin{proposition} \label{prop.Stability}
\emph{(Stability)}
Starting from any collision-free robot state $\state(0) \in \plist{\R^{\dimspace}}^{\order}$ and collision-free governor position  $\govpos(0) \!\in\! \refdomain$ with positive safety level $\safelevel(\state(0), \govpos(0)) \!>\! 0$,  under the robot-governor dynamics in \refeq{eq.RobotGovernorDynamics}, both the robot and the governor asymptotically converge to zero motion at the global goal $\goal$ of the reference motion plan $\refplan_{\goal}$, i.e., 
\begin{align}
\lim_{t \rightarrow \infty} \pos^{(0)}(t) = \lim_{t \rightarrow \infty} \govpos^{(0)}(t) &= \goal, \\
\lim_{t \rightarrow \infty} \pos^{(i)}(t) = \lim_{t \rightarrow \infty} \govpos^{(1)}(t) & = \mat{0}, \quad \!\!\forall i = 1, \ldots, (\order-1). \!\!\!
\end{align}
\end{proposition}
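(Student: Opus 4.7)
The plan is a cascade argument. The governor behaves as a quasi-stationary virtual leader that the robot's inner control loop asymptotically tracks, so I would split the proof into two claims: (a) the governor converges to $\goal$, and (b) the robot state then converges to $(\goal, \mat{0}, \ldots, \mat{0})$. The starting point for (a) is a Lyapunov function $V : \refdomain \to \R_{\geq 0}$ for the reference dynamics \refeq{eq.ReferenceDynamics}, which exists by the assumed global asymptotic stability of $\goal$ under $\refplan_\goal$ on $\refdomain$. Since the governor dynamics in \refeq{eq.ReferenceGovernorScaledDynamics} are of the form $\govpos^{(1)} = \alpha(t)\,\refplan_\goal(\govpos)$ with a non-negative scalar $\alpha(t) \in [0, \govgain]$, one immediately gets $\dot V(\govpos(t)) = \alpha(t)\,\nabla V(\govpos) \cdot \refplan_\goal(\govpos) \leq 0$; combined with \refprop{prop.Safety} and compactness of $\workspace$, the governor trajectory stays in a bounded sublevel set of $V$ inside $\refdomain$.

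For the inner loop, I would combine the global asymptotic stability of the zero-motion state under the fixed-goal controller $\ctrl_\ctrlgoal$ with the motion-range bound \refeq{eq.motion_range_bound_requirement}, which collapses the predicted motion set to $\clist{\ctrlgoal}$ as $\state \to (\ctrlgoal, \mat{0}, \ldots, \mat{0})$. A cascade (or input-to-state stability) argument using a composite Lyapunov function $V(\govpos) + W(\state, \govpos)$, with $W$ certifying zero-motion stability of the inner loop at the current governor position, should yield $\norm{\state(t) - (\govpos(t), \mat{0}, \ldots, \mat{0})} \to 0$, because the governor speed is uniformly bounded by $\govgain\,\safelevel(\state,\govpos)$ and, via \refeq{eq.motion_range_bound_requirement}, itself becomes small whenever the tracking error is large.

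The main obstacle --- and the heart of the proof --- is ruling out the governor stalling at some $\govpos^* \neq \goal$. If $\govpos(t_k) \to \govpos^*$ along a subsequence with $\refplan_\goal(\govpos^*) \neq \mat{0}$, then stalling requires $\safelevel(\state(t_k), \govpos(t_k)) \to 0$. The inner-loop tracking from the previous paragraph forces $\state(t_k) \to (\govpos^*, \mat{0}, \ldots, \mat{0})$, and the motion-range bound \refeq{eq.motion_range_bound_requirement} then collapses $\motionrange_{\ctrl_{\govpos(t_k)}}(\state(t_k))$ onto $\clist{\govpos^*}$. The boundary-avoidance property of $\refplan_\goal$ on $\partial\refdomain$ keeps $\govpos^* \in \mathring{\freespace}$, so $\safedist(\clist{\govpos^*}, \partial\freespace) > 0$; the Lipschitz continuity of the safety level established in \reflem{lem.DistanceLipschitz} then forces $\safelevel(\state(t_k), \govpos(t_k))$ to be bounded away from zero for large $k$, contradicting stalling. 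Hence $\govpos(t) \to \goal$, and the cascade argument yields the advertised limits for $\state(t)$. I would make the interleaving of ``governor moves'' and ``governor stalls'' regimes rigorous via LaSalle's invariance principle (or Barbalat's lemma) applied to the composite Lyapunov function, concluding that the largest invariant set on which its derivative vanishes reduces to the single zero-motion state at $\goal$.
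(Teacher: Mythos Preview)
Your proposal is correct and uses the same core ingredients as the paper: a converse Lyapunov function $V$ for the reference planner, the observation that the governor dynamics \refeq{eq.ReferenceGovernorScaledDynamics} is a nonnegative rescaling of $\refplan_{\goal}$ so that $\dot V\le 0$, the boundary-avoidance property to keep the governor in $\mathring{\freespace}$, and LaSalle's invariance principle together with the collapse of the motion range \refeq{eq.motion_range_bound_requirement} to rule out stalling away from $\goal$.

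The one noteworthy difference is economy. The paper applies LaSalle using only the governor Lyapunov function $U(\govpos)$ (no composite $V+W$) and performs the entire anti-stalling analysis \emph{on the invariant set}: there the governor is already stationary, so the global asymptotic stability of $\ctrl_{\govpos}$ applies directly and forces $\state\to(\govpos,\mat{0},\ldots,\mat{0})$ without any cascade or ISS machinery. Your intermediate ISS-style claim that the governor speed ``becomes small whenever the tracking error is large'' is not generally true --- $\safelevel(\state,\govpos)$ can remain large when $\govpos$ is far from $\partial\freespace$ regardless of $\norm{\state-\govstate}$ --- so that step, as written, does not go through. But since you ultimately fall back on LaSalle anyway, the gap is harmless: you can drop the composite-Lyapunov/ISS layer entirely and argue, as the paper does, directly on the invariant set where $\govpos^{(1)}=\mat{0}$.
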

\begin{proof}
The existence and uniqueness of the robot's and the governor's trajectory follows from the local Lipschitz continuity requirement of the feedback motion control $\ctrl_{\govpos}$, the reference motion plan $\refplan_{\goal}$, and the safety level $\safelevel(\state, \govpos)$ over the compact free space $\freespace$ \cite{khalil_NonlinearSystems2001}. 

It follows from \refeq{eq.PositiveSafetyLevel} that the initial  positive safety level $\safelevel(\state, \govpos) > 0$ ensures that the governor is inside the interior  $\mathring{\freespace}$ of the free space $\freespace$. 
Hence, since the reference motion planner (and its nonnegative scaling in \refeq{eq.GovernorDynamicsFinal}) is inward pointing  on the boundary of its domain $\refdomain \subseteq \freespace$, the governor stays inside the free space interior $\mathring{\freespace}$ for all future times. 

Since the reference planner $\refplan_{\goal}$ is a Lipschitz continuous vector field over its domain  $\refdomain$ with a unique stable point at $\goal$ whose domain of attraction includes $\refdomain$, by the converse Lyapunov theorem \cite{khalil_NonlinearSystems2001}, there exists a smooth Lyapunov function $U: \R^{\dimspace} \rightarrow \R$  for $\refplan_{\goal}$ such that $\nabla U( \govpos) \cdot \refplan_{\goal}(\govpos) < 0$ for all $ \govpos \in \refdomain \setminus \clist{\goal}$ \cite{arslan_koditschek_ICRA2017}.
Since the governor motion is determined by a nonnegatively scaled version of the reference planner as seen in \refeq{eq.ReferenceGovernorScaledDynamics}, the governor dynamics in \refeq{eq.GovernorDynamicsFinal} also satisfies  $\nabla U( \govpos) \cdot \govpos^{(1)} \leq 0$.
Therefore, since the closed-loop robot dynamics in \refeq{eq.RobotDynamicsFinal} is globally asymptotically stable at the governor position $\govpos$, which is contained in the free space interior $\mathring{\freespace}$ as discussed above, it follows from LaSalle's invariance principle  \cite{khalil_NonlinearSystems2001} using $U$ as a Lyapunov function candidate that both the robot and the governor asymptotically converge to zero motion at  the global goal $\goal$.    
\end{proof}

\noindent Hence, one can conclude  from Propositions \ref{prop.Safety} \& \ref{prop.Stability} that:
\begin{theorem} \emph{(Safe \& Stable Robot-Governor Navigation)}
Starting from any safe robot state $\state \in \plist{\R^{\dimspace}}^{\order}$ and governor position  $\govpos \in \refdomain$ with strictly positive safety level $\safelevel(\state, \govpos) > 0$,
the robot-governor dynamics in \refeq{eq.RobotGovernorDynamics} asymptotically brings the $\order^{\text{th}}$-order robot and the first-order governor to  the global goal $\goal$ according to the first-order reference motion plan $\refplan_{\goal}$ with no collisions along the way. 
\end{theorem}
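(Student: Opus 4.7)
The plan is to assemble the theorem directly from Propositions~\ref{prop.Safety} and \ref{prop.Stability}, verifying that the hypotheses of both propositions are simultaneously met by the stated initial conditions. The statement packages the safety and stability conclusions into a single navigation guarantee, so no fundamentally new technical machinery is needed beyond what has already been developed for the two propositions.

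First, I would observe that a strictly positive initial safety level $\safelevel(\state(0), \govpos(0)) > 0$ together with $\govpos(0) \in \refdomain \subseteq \freespace$ already certifies the collision-freeness hypotheses required by \refprop{prop.Safety}. Indeed, by the equivalence \refeq{eq.PositiveSafetyLevel}, positivity of $\safelevel$ is equivalent to $\motionrange_{\ctrl_{\govpos(0)}}(\state(0)) \subset \mathring{\freespace} \subseteq \freespace$, which is precisely the collision-free state condition of \refcor{cor.CollisionFreeState}, while $\govpos(0) \in \refdomain$ is assumed. Applying \refprop{prop.Safety} then yields $\pos(t) \in \freespace$ and $\govpos(t) \in \refdomain \subseteq \freespace$ for all $t \geq 0$, establishing the no-collision portion of the theorem.

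Next, I would invoke \refprop{prop.Stability} on the same initial data: the hypotheses there are identical (a collision-free robot state, a governor in $\refdomain$, and positive initial safety level), so the proposition directly delivers asymptotic convergence of both the $\order^{\text{th}}$-order robot and the first-order governor to zero motion at the global goal $\goal$ of the reference plan $\refplan_{\goal}$. Combining the two conclusions gives precisely the safe and stable robot-governor navigation claimed by the theorem.

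The only subtlety — and really the only place where care is warranted — is the bridging observation that the single assumption $\safelevel(\state(0), \govpos(0)) > 0$ simultaneously feeds both propositions: it supplies the collision-free hypothesis of \refprop{prop.Safety} via \refcor{cor.CollisionFreeState}, and it supplies the strict positivity hypothesis of \refprop{prop.Stability}. Since this verification is essentially a one-line bookkeeping step, I expect no genuine obstacle; the proof of the theorem reduces to quoting the two propositions after noting this common hypothesis structure.
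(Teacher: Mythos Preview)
Your proposal is correct and mirrors exactly what the paper does: the theorem is stated immediately after the sentence ``Hence, one can conclude from Propositions~\ref{prop.Safety} \& \ref{prop.Stability} that:'' and is given no separate proof, so the intended argument is precisely the combination of the two propositions you describe. Your additional bookkeeping observation that $\safelevel(\state(0),\govpos(0))>0$ simultaneously supplies the collision-free hypothesis of \refprop{prop.Safety} (via \refeq{eq.PositiveSafetyLevel} and \refcor{cor.CollisionFreeState}) and the positivity hypothesis of \refprop{prop.Stability} is the correct way to make the one-line deduction explicit.
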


\section{From Low to High Order Motion Planners:\\  Example Planning \& Control Elements} 
\label{sec.ExampleMotionControlPrediction}

In this section, we give example low-level control and high-level planning elements that can be used in our motion planning framework.
As low-level control elements, we provide an example choice of a standard feedback motion control policy for the $\order^{\mathrm{th}}$-order robot dynamics, and  present two example motion range prediction methods associated with that controller.
As high-level planning element, we briefly describe an existing first-order path pursuit reference planner for safe path tracking in cluttered environment \cite{arslan_koditschek_ICRA2017}.

\subsection{PhD Feedback Motion Control}

A classical control approach for stabilizing linear dynamical systems uses negative error feedback \cite{khalil_NonlinearSystems2001}.
Accordingly, we consider a simple negative error feedback policy  to bring the $\order^{\text{th}}$-order robot model to zero motion at any goal position.

\begin{definition} (\emph{PhD Motion Control})
For the $\order^{\text{th}}$-order robot dynamics, the \emph{proportional higher-order derivative (PhD) control} that globally asymptotically stabilizes the robot state $\state \in \plist{\R^{\dimspace}}^{\order}$ at any given goal (e.g., governor) position $\govpos \in \R^{\dimspace}$ at zero motion is defined as
\begin{align} \label{eq.PhD_control}
	\pos^{(\order)} = - \sum^{\order - 1}_{i = 0} \gain_i \pos^{(i)} + \gain_0 \govpos,
\end{align}
where the constant scalar control gains $\gain_0, \ldots, \gain_{n-1} \! \in \! \R$ ensure that the characteristic polynomial $p(\lambda)= \lambda^{\order} + \sum\limits_{i=0}^{\order - 1} \gain_i \lambda$ has roots with negative real parts.
\end{definition}

\noindent To leverage tools from linear system theory, it is convenient to represent the PhD motion control in \refeq{eq.PhD_control} as a first-order higher-dimensional dynamical system in the state space as
\begin{align} \label{eq.PhD_control_ss}
\dot{\state} = (\ctrlmat \otimes \mat{I}_{\dimspace \times \dimspace}) \plist{\state - \govstate},
\end{align}
where $\ctrlmat \! \in \! \R^{\order\times\order}$ is the (Hurtwiz) companion matrix associated with the control gains $\gain_0, \ldots, \gain_{n-1}$, and $\mat{I}_{\dimspace \times \dimspace}$ is the $\dimspace \times \dimspace$ identity matrix, and $\otimes$ denotes the Kronecker product. 
Here, $\state = (\pos^{(0)}, \pos^{(1)}, \ldots, \pos^{(\order -1)}) \in\R^{\order\dimspace}$ and  $\govstate = (\govpos, \mat{0}, \ldots, \mat{0})\in \R^{\order\dimspace}$, respectively, denote the robot's state and the zero-motion goal (e.g., governor) state.

In practice, one often avoids using underdamped PD control for second-order robotic systems to prevent oscillatory robot motion.
The notion of non-underdamped second-order systems can be extended to the PhD control of high-order dynamical systems as \emph{non-overshooting}.

\begin{definition}[\cite{arslan_isleyen_2022}] \label{def.NonovershootingPhDControl}
(\emph{Non-overshooting PhD Control})
A PhD feedback motion control of the form \refeq{eq.PhD_control} is said to be \emph{non-overshooting} if the control gains $\gain_0, \ldots, \gain_{\order-1}$  result in real negative characteristic polynomial roots, i.e., all eigenvalues of the associated companion matrix $\ctrlmat$ are real and negative.
\end{definition}

\subsection{Convex Motion Range Predictions for PhD Control}

We now present two motion range prediction methods  based on Lyapunov ellipsoids and Vandermonde simplexes that provide a convex bound on the robot's motion trajectory under the PhD motion control \cite{arslan_isleyen_2022}.

\subsubsection{Lyapunov Motion Range Prediction for PhD Control}
\label{sec.LyapunovMotionPrediction}

Using the state-space form of the closed-loop robot dynamics under PhD control in \refeq{eq.PhD_control_ss}, one can construct a quadratic Lyapunov function\footnote{Here, $\norm{\vect{x}}_{\mat{P}}$ denotes the weighted Euclidean norm associated with a positive definite matrix $\lyapmat \in \PDM^\order$.} relative to the goal position $\govpos$ as
\begin{align}
V_{\lyapmat} (\state) = \tr{(\state - \govstate)} \mat{P} (\state - \govstate) = \norm{\state - \govstate}_{\lyapmat}^2,
\end{align}
parametrized with a positive definite matrix $\lyapmat \in \PDM^\order$ that uniquely satisfies the Lyapunov equation
\begin{align} \label{eq.lyapunov_equation}
\tr{(\ctrlmat \otimes \mat{I}_{\dimspace \times \dimspace})} \lyapmat + \lyapmat (\ctrlmat \otimes \mat{I}_{\dimspace \times \dimspace}) + \tr{\decaymat} \decaymat = 0,
\end{align} 
for some matrix $\decaymat \in \R^{m \times \order}$ such that $(\ctrlmat \otimes \mat{I}_{\dimspace \times \dimspace}, \decaymat)$ is observable \cite{khalil_NonlinearSystems2001}. 
Since the time rate of change of the Lyapunov function satisfies
{\small
\begin{align}
\frac{\diff}{\diff t} V_{\lyapmat}(\state) & = \tr{(\state - \govstate)} (\tr{(\ctrlmat \otimes \mat{I}_{\dimspace \times \dimspace})} \lyapmat + \lyapmat (\ctrlmat \otimes \mat{I}_{\dimspace \times \dimspace})) (\state - \govstate), \nonumber \\
&  = -\norm{\decaymat (\state - \govstate)}^2 \leq 0,
\end{align}
}%
it follows from LaSalle's invariance principle \cite{khalil_NonlinearSystems2001} that the robot state $\state(t)$, starting from any initial state $\state(0) \in \R^{\order \dimspace}$ is contained in the Lyapunov ellipsoid $\elp(\govstate, \lyapmat^{-1}, \norm{\state(0) - \govstate}_{\lyapmat})$,  
\begin{align} \label{eq.lyapunov_state_bound}
\state(t) \in \elp(\govstate, \mat{P}^{-1}, \norm{\state(0) - \govstate}_{\lyapmat}) \quad \forall t \geq 0,
\end{align}
where $\elp(\elpctr, \elpmat, \elprad) : = \clist{\elpctr + \elprad \elpmat^{\frac{1}{2}} \vect{u} \big | \vect{u} \in \R^{n}, \norm{\vect{u}} \leq 1}$ is the ellipsoid centered at $\elpctr \in \R^n$ and associated with a positive semidefinite matrix $\elpmat \in \PSDM^n$ and a nonnegative scalar $\elprad \geq 0$, and
$\elpmat^{\frac{1}{2}}$ is a square root\footnote{One can compute the unique symmetric positive-definite square-root of $\elpmat$ as  $\elpmat^{\frac{1}{2}}~=~\mat{V}\diag\plist{\sqrt{\sigma_1}, \sqrt{\sigma_2}, \ldots, \sqrt{\sigma_\order}} \tr{\mat{V}}$ using the singular value decomposition $\elpmat = \mat{V} \diag\plist{\sigma_1, \sigma_2, \ldots, \sigma_n} \tr{\mat{V}} $, where $\diag\plist{\lambda_1, \lambda_2, \ldots, \lambda_n}$ is the diagonal matrix with  elements ${\lambda_1, \lambda_2, \ldots, \lambda_n}$.} of $\elpmat$ that satisfies $\elpmat^{\frac{1}{2}} \tr{(\elpmat^{\frac{1}{2}})\!}\! = \elpmat$.

To handle spatial safety constraints, one can bound the robot motion using an orthogonal projection of Lyapunov ellipsoids onto the position coordinates, as shown in \reffig{fig.motion_prediction_demo}. 
\begin{proposition} \label{prop.LyapunovEllipsoids}
(\emph{Lyapunov Ellipsoids for PhD Control})
Let $V_{\mat{P}}(\state) = \tr{(\state - \govstate)} \mat{P} (\state - \govstate) $ be a quadratic  Lyapunov  function, parameterized with a positive definite matrix $\mat{P} \in \PDM^\order$, for the stable closed-loop $\order^{\text{th}}$-order robot dynamics $\dot{\state} = (\ctrlmat \otimes \mat{I}_{\dimspace \times \dimspace}) (\state - \govstate)$  under the PhD control in \refeq{eq.PhD_control_ss}. 

The robot motion trajectory $\pos(t)$, starting at $t=0$ from any initial state $\state(0) \in \R^{\order \dimspace}$ towards a given goal  position $\govpos \in \R^{\dimspace}$ is bounded by the projected Lyapunov ellipsoid as
\begin{align}  \label{eq.ProjectedLyapunovEllipsoid}
	\pos(t) \in \elp(\govpos ,\tr{\mat{I}_{\order\dimspace\times\dimspace}}\lyapmat^{-1}\mat{I}_{\order\dimspace\times\dimspace},\norm{\state(0)\!-\! \govstate}_{\lyapmat} ) \quad \forall t \geq 0. \!\!\! 
\end{align} 
\end{proposition}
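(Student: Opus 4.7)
The plan is to combine the preceding state-space containment in \refeq{eq.lyapunov_state_bound} with a linear projection onto the position coordinates, and then recognize the projected set as the stated ellipsoid using the fact that an ellipsoid depends only on the outer product of its generating matrix.

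First, I would invoke the preceding Lyapunov argument: since $\frac{\diff}{\diff t} V_{\lyapmat}(\state) = -\norm{\decaymat(\state - \govstate)}^2 \leq 0$, the Lyapunov function is non-increasing along trajectories, so $\norm{\state(t) - \govstate}_{\lyapmat} \leq \norm{\state(0) - \govstate}_{\lyapmat}$ for every $t \geq 0$. This directly gives the state-level containment $\state(t) \in \elp(\govstate, \lyapmat^{-1}, \norm{\state(0)-\govstate}_{\lyapmat})$ already recorded in \refeq{eq.lyapunov_state_bound}. Using the ellipsoid parametrization, I can then write
\begin{align*}
\state(t) - \govstate = \norm{\state(0) - \govstate}_{\lyapmat} \lyapmat^{-1/2} \vect{u}(t), \quad \norm{\vect{u}(t)} \leq 1,
\end{align*}
for some unit-ball selector $\vect{u}(t)$.

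Next, I would interpret the projection onto position coordinates as left-multiplication by the selector matrix. Reading the paper's notation, $\mat{I}_{\order\dimspace\times\dimspace}$ is the $\order\dimspace \times \dimspace$ block matrix with $\mat{I}_{\dimspace \times \dimspace}$ in the top block and zeros below, so that $\pos = \tr{\mat{I}_{\order\dimspace\times\dimspace}}\state$ and, since $\govstate = (\govpos, \mat{0}, \ldots, \mat{0})$, also $\govpos = \tr{\mat{I}_{\order\dimspace\times\dimspace}} \govstate$. Multiplying the displayed identity above on the left by $\tr{\mat{I}_{\order\dimspace\times\dimspace}}$ yields
\begin{align*}
\pos(t) - \govpos = \norm{\state(0) - \govstate}_{\lyapmat} \, \tr{\mat{I}_{\order\dimspace\times\dimspace}} \lyapmat^{-1/2} \vect{u}(t).
\end{align*}

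Finally, I would show this parametrized set coincides with the claimed projected Lyapunov ellipsoid in \refeq{eq.ProjectedLyapunovEllipsoid}. The key elementary fact is that for any matrix $\mat{A} \in \R^{\dimspace \times \order\dimspace}$, the image of the Euclidean unit ball under $\mat{A}$ equals the image under the symmetric positive semidefinite square root of $\mat{A}\tr{\mat{A}}$; equivalently, two generators producing the same Gram matrix trace out the same ellipsoid, since they differ only by a right-multiplication with an orthogonal matrix. Applying this with $\mat{A} = \tr{\mat{I}_{\order\dimspace\times\dimspace}} \lyapmat^{-1/2}$ and observing that
\begin{align*}
\mat{A} \tr{\mat{A}} = \tr{\mat{I}_{\order\dimspace\times\dimspace}} \lyapmat^{-1/2} \lyapmat^{-1/2} \mat{I}_{\order\dimspace\times\dimspace} = \tr{\mat{I}_{\order\dimspace\times\dimspace}} \lyapmat^{-1} \mat{I}_{\order\dimspace\times\dimspace},
\end{align*}
where I use that $\lyapmat^{-1/2}$ is symmetric (being the symmetric square root of $\lyapmat^{-1}$), I conclude that $\pos(t)$ belongs to $\elp(\govpos, \tr{\mat{I}_{\order\dimspace\times\dimspace}} \lyapmat^{-1} \mat{I}_{\order\dimspace\times\dimspace}, \norm{\state(0)-\govstate}_{\lyapmat})$, as claimed.

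The proof is essentially routine; the only mild subtlety is the ellipsoid-representation step, since the paper defines ellipsoids via the symmetric square root $\elpmat^{1/2}$ of the shape matrix, whereas the natural generator arising from the projection is the non-symmetric rectangular matrix $\tr{\mat{I}_{\order\dimspace\times\dimspace}} \lyapmat^{-1/2}$. Recording the Gram-matrix invariance of the image of the unit ball cleanly resolves this mismatch, and no further technical obstacle is anticipated.
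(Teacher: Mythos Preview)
Your proposal is correct and follows essentially the same route as the paper: invoke the state-space Lyapunov containment \refeq{eq.lyapunov_state_bound}, then project onto the position coordinates via $\tr{\mat{I}_{\order\dimspace\times\dimspace}}$ and identify the resulting set as the stated ellipsoid. The only difference is cosmetic: the paper outsources the last identification (that the linear image of an ellipsoid is the ellipsoid with shape matrix $\tr{\mat{I}_{\order\dimspace\times\dimspace}}\lyapmat^{-1}\mat{I}_{\order\dimspace\times\dimspace}$) to an external citation, whereas you spell it out via the Gram-matrix invariance of the unit-ball image, which is a nice self-contained touch.
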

\begin{proof}
See \refapp{app.LyapunovEllipsoids}.
\end{proof}

\begin{figure}[t]
\centering
\begin{tabular}{@{}c@{\hspace{2mm}}c@{}}
\includegraphics[width = 0.45\columnwidth]{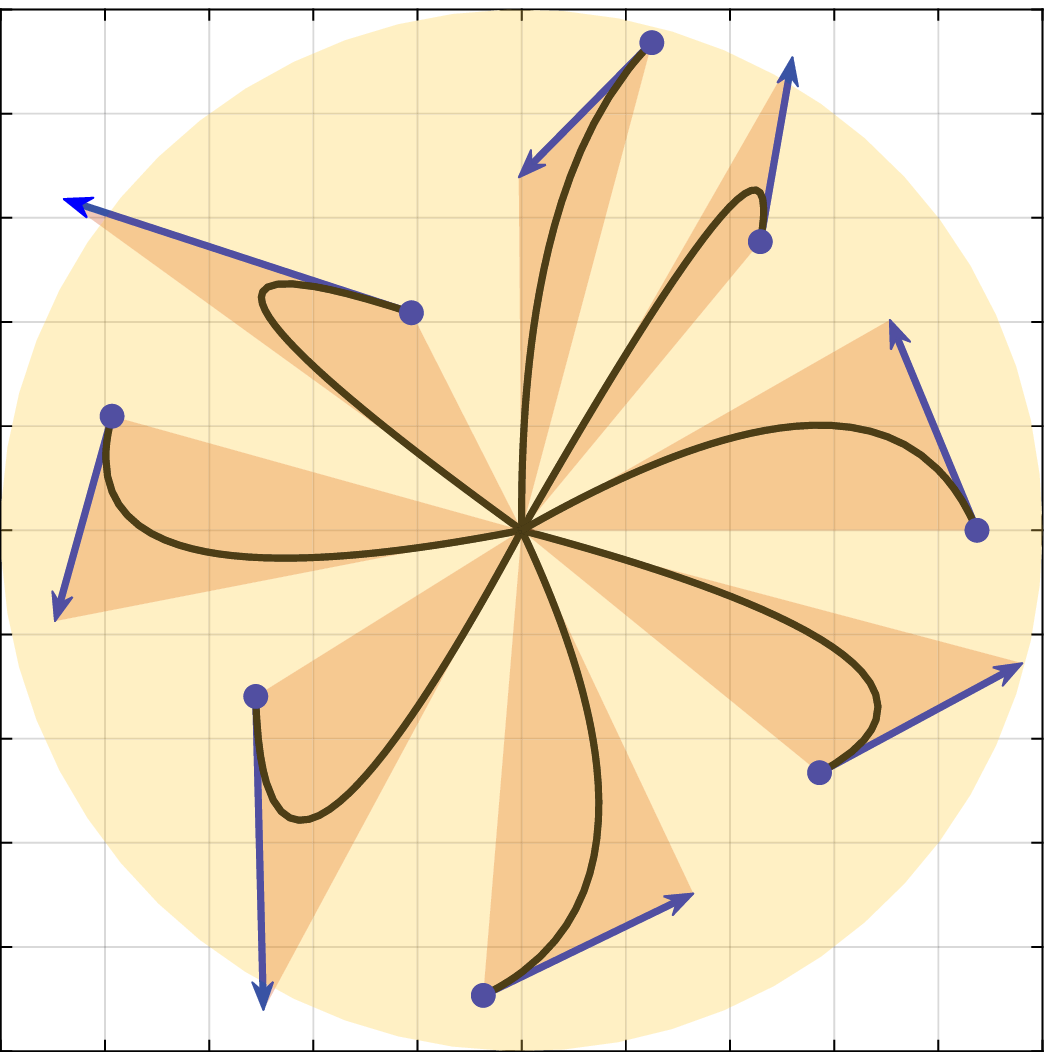} & 
\includegraphics[width = 0.45\columnwidth]{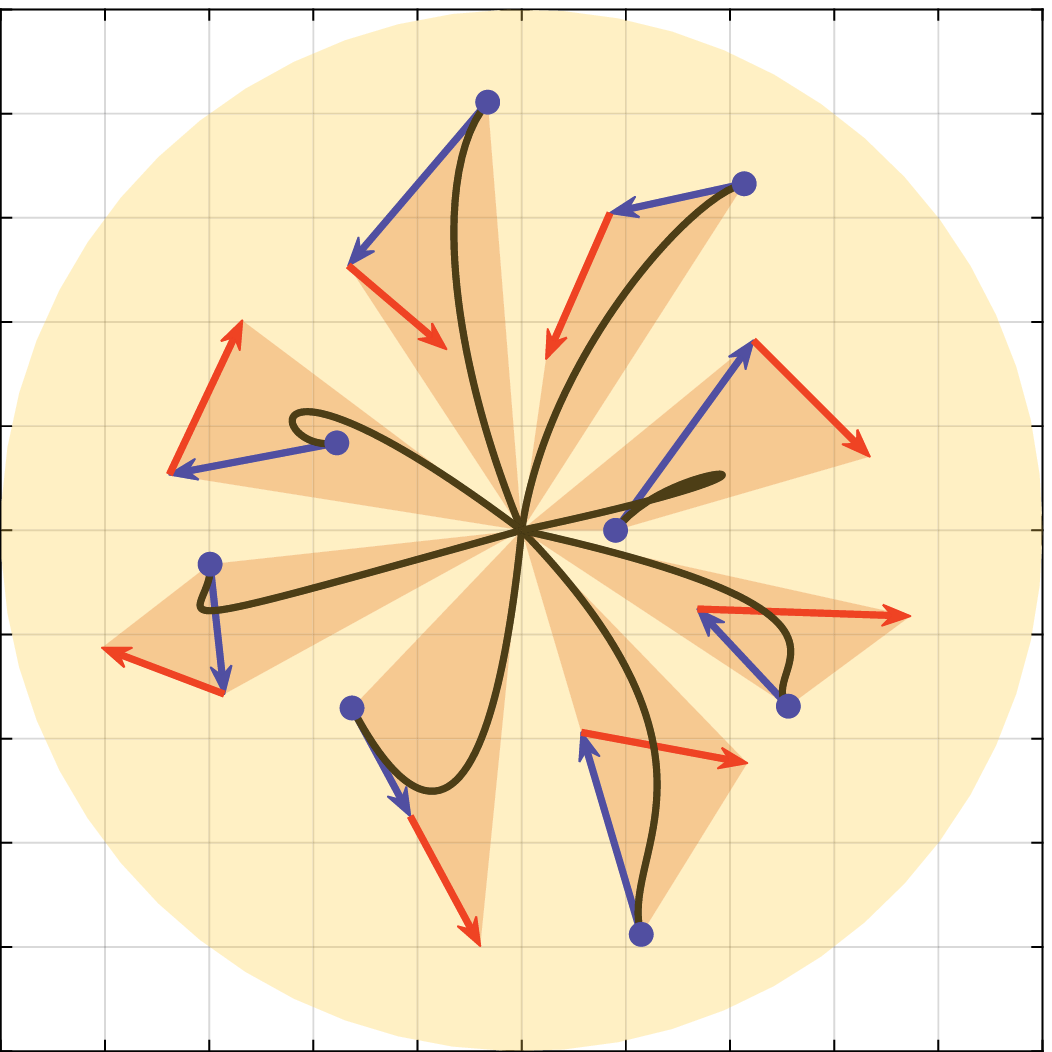}
\end{tabular}
\vspace{-1mm}
\caption{
Lyapunov (yellow circle) and Vandermonde (orange polygons) motion range prediction of PhD motion control towards the origin  for (left) second- and (right) third-order robot dynamics.
Here, the characteristic polynomial roots are uniformly spaced over $[-2, -1]$ and  $\decaymat=\mat{I}_{\order\dimspace \times \order\dimspace }$.}
\label{fig.motion_prediction_demo}
\vspace{-\baselineskip}
\end{figure}

As expected, the projected Lyapunov ellipsoids simply satisfy the boundedness requirement in \refeq{eq.motion_range_bound_requirement}. 

\begin{proposition} \label{prop.bounding_ball_lyapunov}
(Bounding Ball of Lyapunov Ellipsoids)
The projected Lyapunov ellipsoid $\elp(\govpos ,\tr{\mat{I}_{\order\dimspace\times\dimspace}}\lyapmat^{-1}\mat{I}_{\order\dimspace\times\dimspace},\norm{\state\!-\! \govstate}_{\lyapmat})$ in \refeq{eq.ProjectedLyapunovEllipsoid} is bounded by the Euclidean ball centered at $\govpos$ with radius $\motionrangebound \norm{\state - \govstate}$, i.e.,
\begin{align} \label{eq.bounding_ball_lyapunov}
	\!\!\!\! \elp(\govpos ,\tr{\mat{I}_{\order\dimspace\times\dimspace}}\lyapmat^{-\!1}\mat{I}_{\order\dimspace\times\dimspace}, \norm{\state\!-\!\govstate}_{\lyapmat} ) \!\subseteq\! \ball(\govpos, \motionrangebound \norm{\state \!-\! \govstate}),\!\!
\end{align}
where $\motionrangebound = \norm{\tr{\mat{I}_{\order\dimspace\times\dimspace}}\lyapmat^{-1}\mat{I}_{\order\dimspace\times\dimspace}}^{\frac{1}{2}} \norm{\lyapmat}^{\frac{1}{2}}$.
\end{proposition}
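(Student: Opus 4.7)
The plan is to take an arbitrary point $\vect{p}$ in the projected Lyapunov ellipsoid, write it in its parametric form, and bound its distance from $\govpos$ by the required radius. By the definition of $\elp(\cdot,\cdot,\cdot)$, any such $\vect{p}$ can be written as
\begin{align*}
\vect{p} = \govpos + \norm{\state - \govstate}_{\lyapmat} \plist{\tr{\mat{I}_{\order\dimspace\times\dimspace}}\lyapmat^{-1}\mat{I}_{\order\dimspace\times\dimspace}}^{\frac{1}{2}} \vect{u},
\end{align*}
for some $\vect{u} \in \R^{\dimspace}$ with $\norm{\vect{u}} \leq 1$, so that proving $\norm{\vect{p} - \govpos} \leq \motionrangebound \norm{\state - \govstate}$ reduces to bounding the norm of the right-hand side.

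First I would apply the submultiplicativity of the spectral operator norm together with $\norm{\vect{u}} \leq 1$ to get
\begin{align*}
\norm{\vect{p} - \govpos} \leq \norm{\plist{\tr{\mat{I}_{\order\dimspace\times\dimspace}}\lyapmat^{-1}\mat{I}_{\order\dimspace\times\dimspace}}^{\frac{1}{2}}} \norm{\state - \govstate}_{\lyapmat}.
\end{align*}
Next, I would bound the Lyapunov-weighted norm by the Euclidean norm using the Rayleigh-quotient bound for the positive definite matrix $\lyapmat$, namely $\norm{\state - \govstate}_{\lyapmat}^{2} = \tr{(\state - \govstate)} \lyapmat (\state - \govstate) \leq \norm{\lyapmat} \norm{\state - \govstate}^{2}$, so $\norm{\state - \govstate}_{\lyapmat} \leq \norm{\lyapmat}^{\frac{1}{2}} \norm{\state - \govstate}$.

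Finally, I would invoke the fact that for any positive semidefinite matrix $\mat{M}$ the spectral norm satisfies $\norm{\mat{M}^{\frac{1}{2}}} = \norm{\mat{M}}^{\frac{1}{2}}$ (since the eigenvalues of $\mat{M}^{\frac{1}{2}}$ are the nonnegative square roots of those of $\mat{M}$), and apply it to the PSD matrix $\tr{\mat{I}_{\order\dimspace\times\dimspace}}\lyapmat^{-1}\mat{I}_{\order\dimspace\times\dimspace}$. Combining the three inequalities yields $\norm{\vect{p} - \govpos} \leq \norm{\tr{\mat{I}_{\order\dimspace\times\dimspace}}\lyapmat^{-1}\mat{I}_{\order\dimspace\times\dimspace}}^{\frac{1}{2}} \norm{\lyapmat}^{\frac{1}{2}} \norm{\state - \govstate} = \motionrangebound \norm{\state - \govstate}$, which is exactly the ball-inclusion statement.

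The argument is essentially a chain of norm inequalities; the only mildly non-mechanical step is the identity $\norm{\mat{M}^{\frac{1}{2}}} = \norm{\mat{M}}^{\frac{1}{2}}$ for PSD $\mat{M}$, which I would either quote or justify in one line via the spectral decomposition used in the paper's footnote on square roots. I do not anticipate a genuine obstacle.
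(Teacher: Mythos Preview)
Your proposal is correct and essentially identical to the paper's proof: both take an arbitrary point of the ellipsoid in its parametric form and bound its distance to $\govpos$ via submultiplicativity of the spectral norm together with $\norm{\vect{u}}\le 1$ and $\norm{\state-\govstate}_{\lyapmat}\le\norm{\lyapmat}^{1/2}\norm{\state-\govstate}$. The only cosmetic difference is that the paper obtains the latter bound by writing $\norm{\state-\govstate}_{\lyapmat}=\norm{\lyapmat^{1/2}(\state-\govstate)}$ and applying submultiplicativity, whereas you phrase it as a Rayleigh-quotient estimate; you are also more explicit about the identity $\norm{\mat{M}^{1/2}}=\norm{\mat{M}}^{1/2}$, which the paper invokes silently.
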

\begin{proof}
See \refapp{app.bounding_ball_lyapunov}.
\end{proof}

Projected Lyapunov ellipsoids also guarantee that the safety assessment in \refeq{eq.SafetyLevel} is Lipschitz continuous.

\begin{proposition}\label{prop.LyapunovSafetyLipschitz}
(\emph{Lipschitz Continuous Safety Assessment via Lyapunov Ellipsoids}) The projected Lyapunov ellipsoids induce a Lipshitz continuous safety level measure $\safelevel(\state, \govpos) = \safedist(\elp(\govpos ,\tr{\mat{I}_{\order\dimspace\times\dimspace}}\lyapmat^{-\!1}\mat{I}_{\order\dimspace\times\dimspace}, \norm{\state\!-\!\govstate}_{\lyapmat} ), \partial \freespace)$.
\end{proposition}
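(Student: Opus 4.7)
The plan is to reduce the claim to \reflem{lem.DistanceLipschitz} by expressing the projected Lyapunov ellipsoid as the affine image of a fixed set (the unit Euclidean ball) whose transformation parameters depend Lipschitz‑continuously on $(\state, \govpos)$. Concretely, write $\mat{M} \ldf \tr{\mat{I}_{\order\dimspace\times\dimspace}} \lyapmat^{-1} \mat{I}_{\order\dimspace\times\dimspace}$ and note that, by definition of the ellipsoid operator,
\begin{align*}
\elp\plist{\govpos, \mat{M}, \norm{\state - \govstate}_{\lyapmat}} = f_{\mat{A}(\state,\govpos), \vect{b}(\govpos)}\plist{\ball(\mat{0},1)},
\end{align*}
with $\vect{b}(\govpos) = \govpos$ and $\mat{A}(\state,\govpos) = \norm{\state-\govstate}_{\lyapmat}\, \mat{M}^{\frac{1}{2}}$. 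Here $\mat{M}$ is a fixed positive semidefinite matrix, so $\mat{M}^{\frac{1}{2}}$ is a fixed matrix as well, and the pair $(\mat{A}, \vect{b})$ captures the entire dependence on $(\state, \govpos)$.

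Next, I would apply \reflem{lem.DistanceLipschitz} with $X = \ball(\mat{0},1)$ and $Y = \partial \freespace$ and combine its two inequalities via the triangle inequality. Since $\max_{\vect{x} \in \ball(\mat{0},1)} \norm{\vect{x}} = 1$, this yields
\begin{align*}
\absval{\safelevel(\state,\govpos) - \safelevel(\state',\govpos')} \leq \norm{\vect{b}(\govpos) - \vect{b}(\govpos')} + \norm{\mat{A}(\state,\govpos) - \mat{A}(\state',\govpos')}.
\end{align*}
The first term equals $\norm{\govpos - \govpos'}$, which is trivially Lipschitz in $(\state,\govpos)$. For the second term, I would use the reverse triangle inequality for the weighted norm $\norm{\cdot}_{\lyapmat}$ to obtain
\begin{align*}
\norm{\mat{A}(\state,\govpos) - \mat{A}(\state',\govpos')} \leq \norm{\mat{M}^{\frac{1}{2}}}\, \norm{(\state-\govstate) - (\state'-\govstate')}_{\lyapmat},
\end{align*}
and then bound the weighted norm by $\norm{\lyapmat}^{\frac{1}{2}}$ times the standard Euclidean norm. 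Finally, since $\govstate = (\govpos, \mat{0}, \ldots, \mat{0})$ depends linearly (and hence 1‑Lipschitz) on $\govpos$, one concludes a global Lipschitz bound on $\safelevel$ in $(\state,\govpos)$ with constant of the order $1 + \norm{\mat{M}^{\frac{1}{2}}}\, \norm{\lyapmat}^{\frac{1}{2}}$.

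The only subtlety, rather than a real obstacle, is the piecewise definition of $\safelevel$ in \refeq{eq.SafetyLevel} which sets the safety level to zero when $\pos^{(0)} \notin \freespace$. Since the interior analysis above establishes (global) Lipschitz continuity on $\clist{\pos^{(0)} \in \freespace}$, and the predicted ellipsoid always contains $\pos^{(0)}$ so that $\safelevel(\state,\govpos) = 0$ whenever the robot touches $\partial\freespace$, one obtains continuity across the boundary and hence local Lipschitz continuity of $\safelevel$ on the whole state‑governor space, which is all that is needed for the existence/uniqueness arguments used in \refprop{prop.Stability}.
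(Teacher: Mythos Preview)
Your proposal is correct and follows essentially the same route as the paper: express the projected Lyapunov ellipsoid as $f_{\mat{A},\vect{b}}(\ball(\mat{0},1))$ with $\vect{b}=\govpos$ and $\mat{A}=\norm{\state-\govstate}_{\lyapmat}\,\mat{M}^{1/2}$, note that these parameters are Lipschitz in $(\state,\govpos)$, and invoke \reflem{lem.DistanceLipschitz}. You simply work out more detail than the paper (explicit Lipschitz constants via the triangle and reverse triangle inequalities, and the harmless boundary case of the piecewise definition), whereas the paper's proof just asserts that $\mat{A}$ and $\vect{b}$ are smooth Lipschitz functions and concludes directly from the lemma.
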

\begin{proof}
See \refapp{app.LyapunovSafetyLipschitz}.
\end{proof}

\begin{figure*}[t]
\centering
\begin{tabular}{@{}c@{\hspace{0.5mm}}c@{\hspace{0.5mm}}c@{\hspace{0.5mm}}c@{\hspace{0.5mm}}c@{}}
\includegraphics[width=0.199\textwidth]{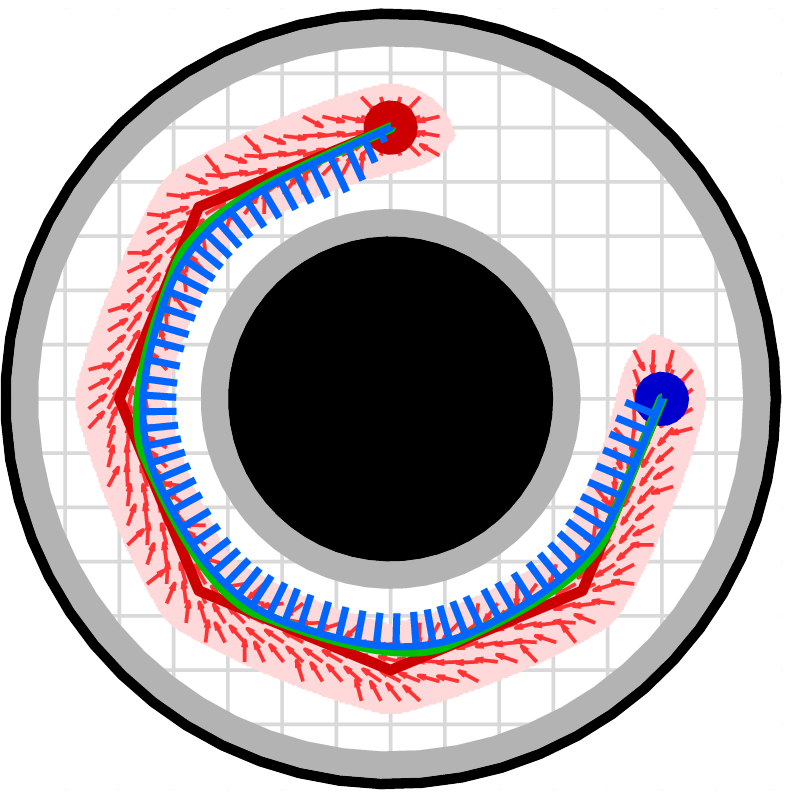} 
&
\includegraphics[width=0.199\textwidth]{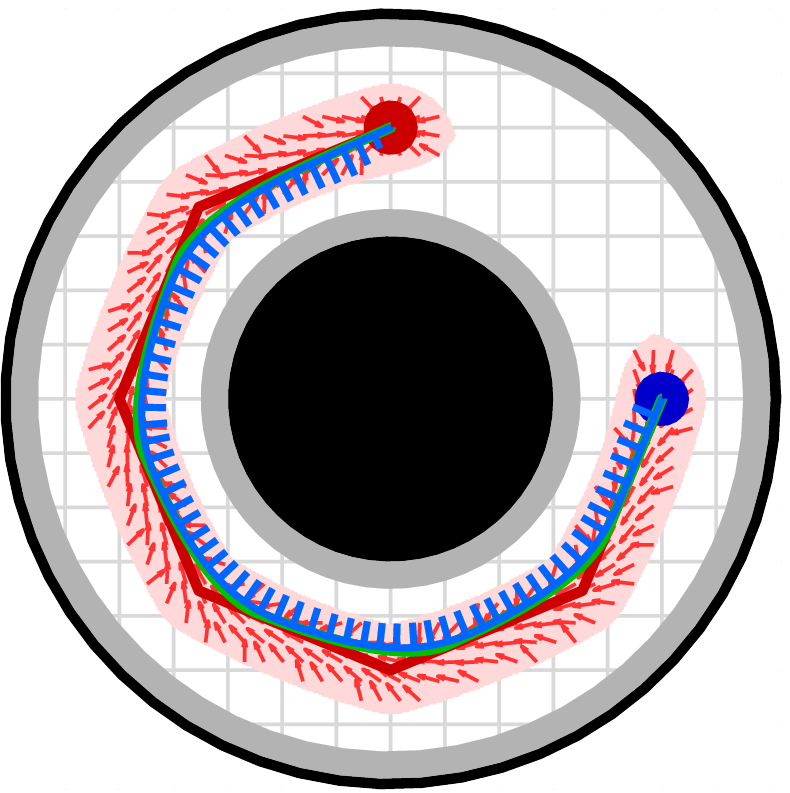} 
&
\includegraphics[width=0.199\textwidth]{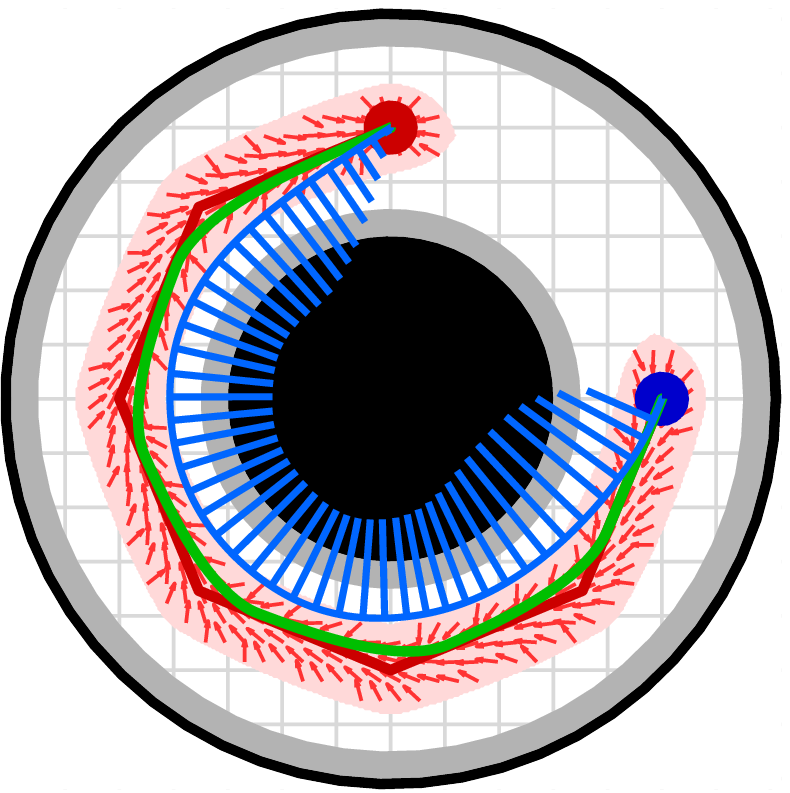} 
&
\includegraphics[width=0.199\textwidth]{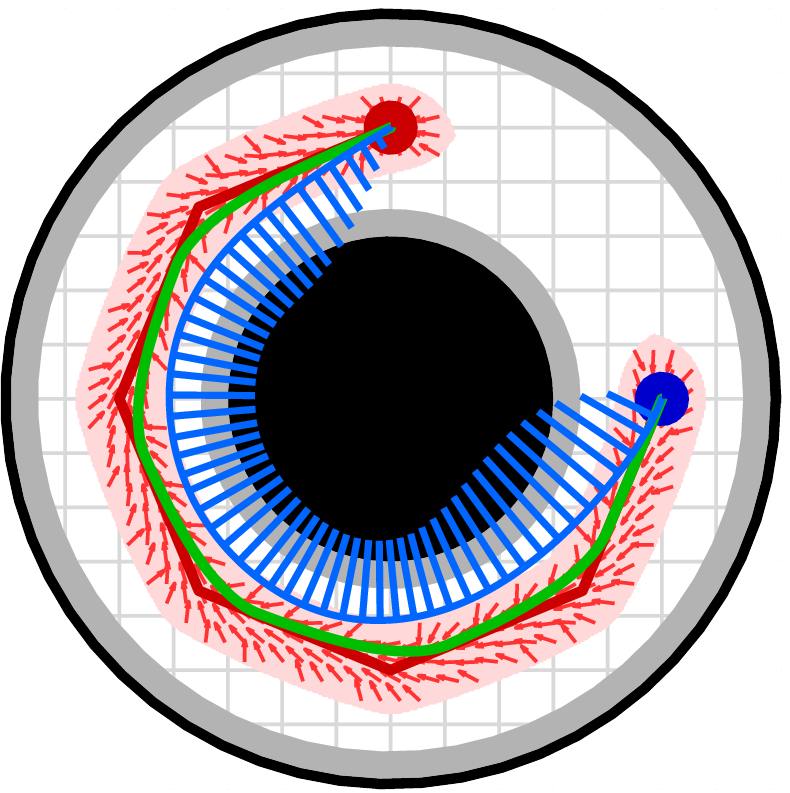}
&
\includegraphics[width=0.199\textwidth]{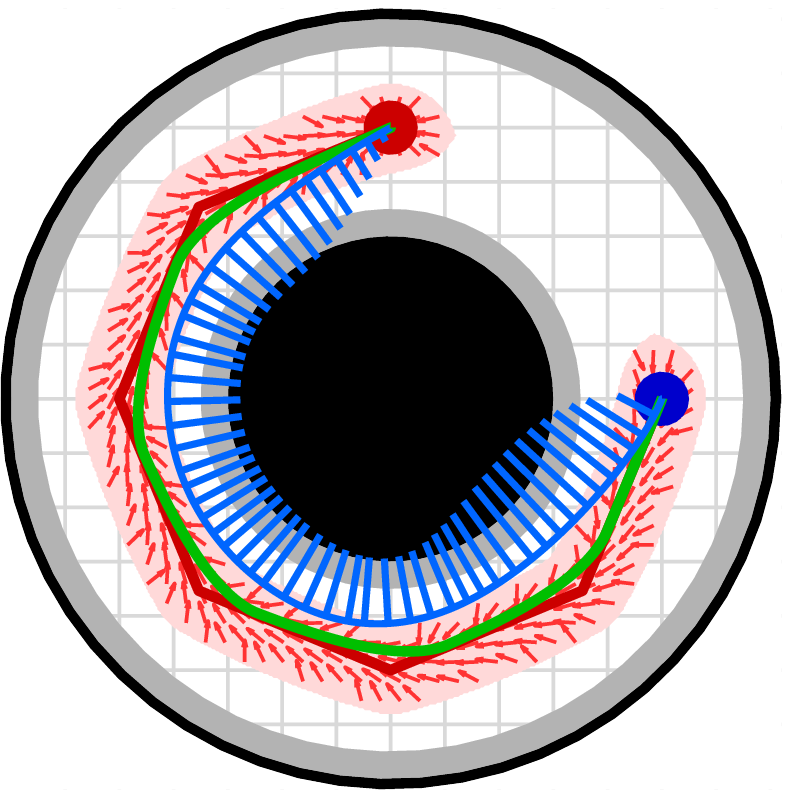}
\\[-1mm]
\small{(a)} & \small{(b)} & \small{(c)} & \small{(d)} & \small{(e)} 
\end{tabular}
\vspace{-2mm}
\caption{Safe and smooth robot navigation over a circular corridor using a first-order path pursuit reference planner.
The reference vector field (red arrows) are constructed based on a piecewise linear reference path (red line) starting at the blue circle and ending at the red circle. Workspace and configuration space obstacles are represented in black and gray, respectively.
Robot motion is illustrated by blue lines, where blue bars indicate robot speed, and governor motion is represented by green lines. Robot motion range is predicted using Lyapunov ellipsoids in (a, b) and Vandermonde simplexes in (c, d, e).  Robot is assumed to have the second-order dynamics in (a,c), the third-order dynamics in (b,d), and the forth-order dynamics in (e).}
\label{fig.SafeCorridorNavigation}
\vspace{-2mm}
\end{figure*}

\subsubsection{Vandermonde Motion Range Prediction}

As an alternative to Lyaponov ellipsoids, we recently introduce Vandermonde simplexes for bounding robot motion under non-overshooting PhD motion control (\refdef{def.NonovershootingPhDControl}).
As seen in \reffig{fig.motion_prediction_demo}, Vandermonde simplexes offer significantly more accurate and less conservative motion range prediction compared to Lyapunov ellipsoids because Vandermonde simplexes have a stronger dependency on control parameters and robot state.
\begin{proposition} [\cite{arslan_isleyen_2022}] \label{prop.motion_range_simplex}
(\emph{Vandermonde Simplexes for PhD Control})
Consider a non-overshooting PhD feedback motion controller for the $\order^{\text{th}}$-order robot model with control gains $\gain_0, \ldots, \gain_{\order-1} \in \R$ that are associated with real negative characteristic polynomial roots $\lambda_1, \ldots, \lambda_{\order} \in \R_{< 0}$. 

The robot trajectory $\pos(t)$,  starting at $t = 0$ from the initial state $\state(0) = (\pos^{(0)}_0,  \pos^{(1)}_0,  \ldots, \ \pos^{(\order-1)}_0)$ towards a given goal position $\govpos \in \R^\dimspace$, is contained in the Vandermonde simplex $\vsimplex_{\govpos}(\state(0))$ that is defined as
{\small
\begin{align} \label{eq.motion_range_simplex}
\pos(t) \!\in\! \vsimplex_{\govpos}(\state(0)) := \conv \plist{\! \govpos, \pos^{(0)}_0, \pos^{(0)}_0 \!+ \frac{\widehat{\gain}_1}{\widehat{\gain}_0} \pos^{(1)}_0, \ldots, \sum_{i=0}^{n-1} \frac{\widehat{\gain}_i}{\widehat{\gain}_0} \pos^{(i)}_0\!\!},\!\!
\end{align}
}%
where $\conv$ denotes the convex hull operator, and the positive coefficients $\widehat{\gain}_0, \ldots,  \widehat{\gain}_{\order-1}$ uniquely satisfy
\begin{align}
\!\!\prod_{\substack{\lambda_i \in \clist{\lambda_1, \ldots, \lambda_\order} \\ \lambda_i \neq \max(\lambda_1, \ldots, \lambda_\order)}} \!\!\!\! (\lambda - \lambda_i) 
= \sum_{i = 0}^{\order -1} \widehat{\gain}_i \lambda^{i}. \!\!   
\end{align}
\end{proposition}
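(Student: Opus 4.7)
The plan is to work in the error coordinates $\vect{e}(t) \ldf \pos(t) - \govpos$, which, by the characteristic polynomial factorization of the PhD control in \refeq{eq.PhD_control}, satisfy $(D - \lambda_1)(D - \lambda_2) \cdots (D - \lambda_\order) \vect{e}(t) = \mat{0}$, where $D = \deriv/\deriv t$. For the generic case of distinct roots (repeated roots being a limiting case), the modal expansion
\begin{equation*}
\vect{e}(t) \,=\, \sum_{j=1}^{\order} \vect{c}_j \, e^{\lambda_j t}
\end{equation*}
holds, where the vector coefficients $\vect{c}_j \in \R^{\dimspace}$ are recovered from the initial state $\plist{\pos^{(0)}_0 - \govpos,\, \pos^{(1)}_0,\, \ldots,\, \pos^{(\order-1)}_0}$ by inverting a transposed Vandermonde matrix --- this is precisely the source of the name ``Vandermonde simplex.''

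The next step is to re-express each proposed vertex $v_k \ldf \sum_{i=0}^{k-1} \frac{\widehat{\gain}_i}{\widehat{\gain}_0} \pos^{(i)}_0$, for $k = 1, \ldots, \order$, in this modal basis. Using $\vect{e}^{(i)}(0) = \sum_j \lambda_j^i \vect{c}_j$ together with $\pos^{(0)}_0 = \vect{e}(0) + \govpos$, a direct rearrangement yields
\begin{equation*}
v_k - \govpos \,=\, \sum_{j=1}^{\order} \frac{q_k(\lambda_j)}{\widehat{\gain}_0}\, \vect{c}_j, \qquad q_k(\lambda) \ldf \sum_{i=0}^{k-1} \widehat{\gain}_i \, \lambda^i.
\end{equation*}
Since $q_\order = q$ vanishes at every non-maximal root by construction, the top vertex $v_\order$ isolates the slowest mode along $\lambda_{\max} \ldf \max_j \lambda_j$, while lower-index vertices progressively incorporate contributions from faster-decaying modes.

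I would then seek barycentric weights $\mu_0(t), \ldots, \mu_\order(t)$, with $\mu_0$ attached to $v_0 = \govpos$, satisfying $\pos(t) = \sum_{k=0}^\order \mu_k(t)\, v_k$ and $\sum_{k=0}^\order \mu_k(t) = 1$. Matching the coefficient of each mode $\vect{c}_j$ in $\vect{e}(t) = \sum_{k=1}^{\order} \mu_k(t)\,(v_k - \govpos)$ produces a triangular linear system in $\{\mu_k\}_{k=1}^\order$ (since $q_k(\lambda_j)/\widehat{\gain}_0$ forms a triangular array once the modes are ordered from slowest to fastest) whose closed-form solution expresses each $\mu_k(t)$ as an explicit signed combination of the exponentials $\{e^{\lambda_j t}\}_{j=1}^\order$. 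Membership in $\vsimplex_{\govpos}(\state(0)) = \conv(v_0, \ldots, v_\order)$ is then equivalent to $\mu_k(t) \geq 0$ for $k \geq 1$ together with $\mu_0(t) = 1 - \sum_{k \geq 1} \mu_k(t) \geq 0$.

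The main obstacle is verifying this nonnegativity and the partition-of-unity bound uniformly for all $t \geq 0$. Both consequences hinge on two structural facts equivalent to the non-overshooting hypothesis of \refdef{def.NonovershootingPhDControl}: first, Vieta's formulas applied to the negative roots force $\widehat{\gain}_i > 0$ for every $i$; second, the strict ordering $\lambda_1 > \cdots > \lambda_\order$ imposes a monotone sign pattern on the exponential differences $e^{\lambda_j t} - e^{\lambda_{j'} t}$ that is preserved under the truncated polynomials $q_k$. Combined with $e^{\lambda_j t} \in (0,1]$ for $t \geq 0$, these properties deliver the required bounds. The endpoints provide useful sanity checks: at $t = 0$ one has $\mu_1 = 1$ with all other weights zero, and as $t \to \infty$ every modal term decays so $\mu_0 \to 1$ and $\pos(t) \to \govpos$, consistent with the asymptotic stability already implicit in the PhD design.
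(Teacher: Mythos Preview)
The paper does not supply its own proof of this proposition: it is imported verbatim from the companion reference \cite{arslan_isleyen_2022}, so there is no in-paper argument against which to compare your attempt.

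On the substance of your sketch: the modal decomposition and the identity $v_k - \govpos = \sum_j \frac{q_k(\lambda_j)}{\widehat{\gain}_0}\vect{c}_j$ are correct and do capture why the construction is called a \emph{Vandermonde} simplex. However, the claim that the matrix $M_{jk} = q_k(\lambda_j)/\widehat{\gain}_0$ is triangular after ordering the modes is not right. You correctly observe that the last column $q_\order(\lambda_j)$ has a single nonzero entry (at the slowest mode $\lambda_{\max}$, since $q_\order = q$ annihilates every other root), but the intermediate truncations $q_k$ for $1 < k < \order$ are \emph{not} designed to vanish at any particular subset of the $\lambda_j$; they are merely partial sums of $q$. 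The matrix is invertible --- the $q_k$ are a triangular change of basis from the monomials, so $\det M$ is a nonzero multiple of the Vandermonde determinant --- but it is not triangular, and your back-substitution picture does not go through as stated.

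Because of this, the heart of the matter --- proving $\mu_k(t) \geq 0$ and $\sum_{k\geq 1}\mu_k(t) \leq 1$ for all $t \geq 0$ --- is not reached by your argument. The appeal to ``monotone sign patterns on $e^{\lambda_j t} - e^{\lambda_{j'} t}$ preserved under the $q_k$'' is too vague to carry the inequality; once the triangular structure is gone, the barycentric weights are genuinely signed combinations of all $\order$ exponentials, and their nonnegativity does not follow from pairwise monotonicity alone. A workable route (and the one more in the spirit of \cite{arslan_isleyen_2022}) is to exploit the cascade factorization $(D-\lambda_1)\cdots(D-\lambda_\order)\vect{e}=\mat{0}$ recursively: peel off the slowest mode $\lambda_{\max}$, observe that the residual satisfies an $(\order-1)^{\text{th}}$-order non-overshooting PhD law whose characteristic polynomial is exactly $q(\lambda)=\sum_i \widehat{\gain}_i\lambda^i$, and apply induction on $\order$. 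That recursion is what generates the nested structure of the vertices $v_k$ and delivers the convex-combination bound directly, without ever solving the full $\order\times\order$ system.
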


An important property of Vandermonde simplexes is that $\vsimplex_{\govpos}(\state)$ is a linear transformation of the standard $\order$-simplex 
\mbox{$\simplex_{\order}:= \clist{(s_0, \ldots, s_\order) \!\in\! \R^{\order + 1} \big | \sum_{i = 0}^{\order} s_i = 1, s_i \geq 0 \,\,  \forall i   }$}, i.e.,
\begin{align}\label{eq.VandermondeSimplexLinearTransformation}
\vsimplex_{\govpos}(\state) = \clist{\mat{X} \vect{s} \, \big | \ ,\vect{s} \in \simplex_{\order}},
\end{align}
where the transformation matrix $\mat{X}$ is defined as a linear function of the robot state $\state$ and the goal position $\govpos$ as
\begin{align}
\mat{X} = \blist{\govpos, \pos^{(0)}, \ldots, \sum_{i=0}^{n-1} \frac{\widehat{\gain}_i}{\widehat{\gain}_0} \pos^{(i)} } \in \R^{\dimspace \times (\order +1)}.
\end{align}

As a motion range prediction method, Vandermonde simplexes satisfy the boundedness requirement in \refeq{eq.motion_range_bound_requirement}. 

\begin{proposition}\label{prop.VandermondeSimplexBound}
(Bounding Ball of Vandermonde Simplexes)
A Vandermonde simplex can be bounded by a Euclidean ball~as
\begin{align}
\vsimplex_{\govpos}(\state) \subseteq \ball(\govpos, \motionrangebound \norm{\state - \govstate}),
\end{align}
where $\motionrangebound =\sqrt{\order} \frac{\max(\widehat{\gain}_0, \ldots, \widehat{\gain}_{\order-1})}{\widehat{\gain}_0}$.
\end{proposition}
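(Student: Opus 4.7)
The plan is to exploit the fact that the claimed bounding set $\ball(\govpos, \motionrangebound \norm{\state - \govstate})$ is convex, so by convexity of the ball it suffices to verify that every vertex of the Vandermonde simplex $\vsimplex_{\govpos}(\state)$ lies inside this ball. Once this is shown, the full convex hull automatically lies inside the ball. I would therefore list the $n+1$ vertices from \refeq{eq.motion_range_simplex}, namely $\govpos$ together with the partial sums $\vect{v}_k := \sum_{i=0}^{k-1} \frac{\widehat{\gain}_i}{\widehat{\gain}_0} \pos^{(i)}$ for $k = 1, \ldots, n$, and check each one in turn.

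The vertex $\govpos$ is trivially at distance $0 \leq \motionrangebound \norm{\state - \govstate}$ from $\govpos$, so it is in the ball. For each remaining vertex $\vect{v}_k$, I would use that $\widehat{\gain}_0 / \widehat{\gain}_0 = 1$ to rewrite
\begin{align*}
\vect{v}_k - \govpos = (\pos^{(0)} - \govpos) + \sum_{i=1}^{k-1} \frac{\widehat{\gain}_i}{\widehat{\gain}_0} \pos^{(i)}.
\end{align*}
The right-hand side is a linear combination of exactly the blocks that make up $\state - \govstate = (\pos^{(0)} - \govpos, \pos^{(1)}, \ldots, \pos^{(\order -1)})$, with scalar weights $1, \widehat{\gain}_1/\widehat{\gain}_0, \ldots, \widehat{\gain}_{k-1}/\widehat{\gain}_0$. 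Applying Cauchy--Schwarz to this weighted sum of vectors gives
\begin{align*}
\norm{\vect{v}_k - \govpos}^2 \leq \plist{\sum_{i=0}^{k-1} \plist{\frac{\widehat{\gain}_i}{\widehat{\gain}_0}}^{\!2}} \plist{\norm{\pos^{(0)} - \govpos}^2 + \sum_{i=1}^{k-1} \norm{\pos^{(i)}}^2}.
\end{align*}
The second factor is bounded above by $\norm{\state - \govstate}^2$, since it is a subsum of the full squared norm.

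For the first factor, I would bound each coefficient by its maximum and use $k \leq \order$ to obtain $\sum_{i=0}^{k-1} (\widehat{\gain}_i/\widehat{\gain}_0)^2 \leq \order \, (\max_i \widehat{\gain}_i / \widehat{\gain}_0)^2$. Combining these two estimates yields exactly $\norm{\vect{v}_k - \govpos} \leq \sqrt{\order} \frac{\max(\widehat{\gain}_0, \ldots, \widehat{\gain}_{\order-1})}{\widehat{\gain}_0} \norm{\state - \govstate} = \motionrangebound \norm{\state - \govstate}$, so $\vect{v}_k \in \ball(\govpos, \motionrangebound\norm{\state-\govstate})$ for every $k$, which finishes the proof by convexity.

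There is no real obstacle here; the argument is a short Cauchy--Schwarz plus a convexity argument. The only minor subtlety to watch is that the coefficient of $\pos^{(0)} - \govpos$ really is $1 = \widehat{\gain}_0/\widehat{\gain}_0$ (so that the positivity of $\widehat{\gain}_0$ from \refprop{prop.motion_range_simplex} is needed, which justifies writing $\max(\widehat{\gain}_0, \ldots, \widehat{\gain}_{\order-1})/\widehat{\gain}_0 \geq 1$), and that the bound on $\sum (\widehat{\gain}_i/\widehat{\gain}_0)^2$ is uniform in $k$ so that a single $\motionrangebound$ works for all vertices simultaneously.
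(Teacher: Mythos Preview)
Your proof is correct. The route differs slightly from the paper's: the paper takes an arbitrary convex combination $\vect{z} = s_0 \govpos + \sum_{i=1}^{\order} s_i \vect{v}_i$, carries the coefficients $s_i$ through a triangle-inequality estimate, and only at the end invokes the $\ell_1$--$\ell_2$ norm equivalence $\norm{\vect{a}}_1 \leq \sqrt{\order}\,\norm{\vect{a}}_2$. You instead first reduce to vertices by convexity of the ball and then apply Cauchy--Schwarz directly to each vertex. The two arguments are equivalent at the level of the underlying inequality (your Cauchy--Schwarz step is exactly the paper's triangle inequality followed by $\ell_1 \leq \sqrt{\order}\,\ell_2$), but your vertex reduction is a little tidier since it spares you from tracking the simplex coefficients $s_i$ through the computation.
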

\begin{proof}
See \refapp{app.VandermondeSimplexBound}.
\end{proof}

Vandermonde simplexes also yield a Lipschitz continuous safety assessment as described in \refeq{eq.SafetyLevel}.  

\begin{proposition}\label{prop.VandermondeSafetyLipschitz}
(\emph{Lipschitz Continuous Safety Assessment via Vandermonde Simplexes})
The Vandermonde-simplex-based safety level measure $\safelevel(\state, \govpos) = \safedist(\vsimplex_{\govpos}(\state), \partial \freespace)$  is  Lipschitz continuous. 
\end{proposition}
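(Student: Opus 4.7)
My plan is to invoke \reflem{lem.DistanceLipschitz} and reduce the problem to showing that the parameters defining the Vandermonde simplex are Lipschitz functions of $(\state,\govpos)$, which is essentially immediate since they are linear.

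First, I would recall the key structural fact from \refeq{eq.VandermondeSimplexLinearTransformation}: the Vandermonde simplex is the image of the \emph{fixed} standard simplex $\simplex_\order \subset \R^{\order+1}$ under the linear map $\vect{s} \mapsto \mat{X}\vect{s}$, i.e., $\vsimplex_{\govpos}(\state) = f_{\mat{X},\mat{0}}(\simplex_\order)$ in the notation of \reflem{lem.DistanceLipschitz}. Here $\mat{X} \in \R^{\dimspace\times(\order+1)}$ depends on $(\state,\govpos)$ only through a linear rearrangement of the columns $\govpos, \pos^{(0)}, \pos^{(0)} + \tfrac{\widehat{\gain}_1}{\widehat{\gain}_0}\pos^{(1)}, \ldots, \sum_{i=0}^{\order-1}\tfrac{\widehat{\gain}_i}{\widehat{\gain}_0}\pos^{(i)}$. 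Since the coefficients $\widehat{\gain}_i/\widehat{\gain}_0$ are fixed constants determined by the control gains, the map $(\state,\govpos)\mapsto \mat{X}$ is linear and therefore globally Lipschitz, with some constant $L_{\mat{X}}$ that can be read off from the coefficient magnitudes.

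Next, I would apply \reflem{lem.DistanceLipschitz} with $X = \simplex_\order$ and $Y = \partial\freespace$. Inequality \refeq{eq.DistanceLipschitz_A} gives
\begin{align*}
\absval{\safedist(f_{\mat{X},\mat{0}}(\simplex_\order),\partial\freespace) - \safedist(f_{\mat{X}',\mat{0}}(\simplex_\order),\partial\freespace)} \leq \norm{\mat{X}-\mat{X}'}\,\max_{\vect{s}\in\simplex_\order}\norm{\vect{s}},
\end{align*}
and $\max_{\vect{s}\in\simplex_\order}\norm{\vect{s}} = 1$ because $\sum_i s_i^2 \leq (\sum_i s_i)^2 = 1$ on the standard simplex. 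Composing with the linear (hence $L_{\mat{X}}$-Lipschitz) map $(\state,\govpos)\mapsto \mat{X}$ yields
\begin{align*}
\absval{\safelevel(\state,\govpos) - \safelevel(\state',\govpos')} \leq L_{\mat{X}}\bigl(\norm{\state-\state'}+\norm{\govpos-\govpos'}\bigr),
\end{align*}
which is the desired (global) Lipschitz continuity of $\safelevel$ on the region where $\pos^{(0)} \in \freespace$.

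The only subtlety I would need to address is the piecewise definition of $\safelevel$ from \refeq{eq.SafetyLevel}, which sets $\safelevel = 0$ whenever $\pos^{(0)} \notin \freespace$. I expect this to be the main technical point: one must check that no discontinuity arises at the interface $\pos^{(0)} \in \partial\freespace$. This follows since as $\pos^{(0)}$ approaches $\partial\freespace$ from inside $\freespace$, the simplex $\vsimplex_{\govpos}(\state)$ contains $\pos^{(0)}$ (being the image under $\mat{X}$ of the vertex corresponding to $\pos^{(0)}$), so $\safedist(\vsimplex_{\govpos}(\state),\partial\freespace) \to 0$, matching the value on the other side. Hence local Lipschitz continuity is preserved across the interface, and the conclusion holds on all of $\plist{\R^\dimspace}^\order \times \R^\dimspace$.
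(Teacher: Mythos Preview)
Your proposal is correct and follows essentially the same approach as the paper: invoke \reflem{lem.DistanceLipschitz} using the representation \refeq{eq.VandermondeSimplexLinearTransformation} of $\vsimplex_{\govpos}(\state)$ as a linear image of the fixed standard simplex, with the transformation matrix depending linearly on $(\state,\govpos)$. Your write-up is in fact more detailed than the paper's one-line proof---in particular, your treatment of the piecewise clause in \refeq{eq.SafetyLevel} at the interface $\pos^{(0)}\in\partial\freespace$ is a point the paper does not explicitly address.
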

\begin{proof}
See \refapp{app.VandermondeSafetyLipschitz}.
\end{proof}

\subsection{Path Pursuit Reference Planner}
\label{sec.PathPursuitReferencePlanner}

As a reference motion planner, we consider the ``move-to-projected-path-goal'' navigation policy in \cite{arslan_koditschek_ICRA2017} that constructs a first-order vector field around a given navigation path based on a safe pure pursuit path tracking approach \cite{coulter_TechReport1992}.

Let $\path : [0,1] \rightarrow \mathring{\freespace}$ be a continuous navigation path inside the free space interior $\mathring{\freespace}$, either generated by a standard path planner \cite{lavalle_PlanningAlgorithms2006} or determined by the user, that connects the start point $\path(0)$ to the end point  $\path(1)=\goal$. 
Accordingly, the first-order ``move-to-projected-path-goal'' law (a.k.a. path pursuit reference planner) $\refplan_{\path}: \refdomain_{\path} \rightarrow \R^{\dimspace}$ is defined  over its positively invariant domain $\refdomain_{\path}$ \cite{arslan_koditschek_ICRA2017}, 
\begin{align}
\refdomain_{\path} \ldf \clist{ \vect{q} \in \freespace \big | \safedist(\vect{q} , \path) \leq \safedist(\vect{q} , \partial \freespace)  },
\end{align}
as 
\begin{align}\label{eq.pathpursuitplanner}
\govpos^{(0)} = \refplan_{\path}(\govpos) = - \gain_{\path}(\govpos - \path^{*}(\govpos)),
\end{align}
where $\gain_{\path} > 0$ is a constant gain and the ``projected path goal'', denoted by $\path^*(\govpos)$,  is determined as
\begin{align}
\!\!\path^*(\govpos) \! \ldf\! \path \plist{\! 
\max \plist{\! \Big. \clist{  
\alpha \!\in\! [0,\!1]  \big|  \path(\alpha) \!\in\! \ball(\govpos,\!\safedist(\govpos,\partial\freespace)) 
\!} 
\!} \!\!}.\!\!\!
\end{align}
By construction, for piecewise continuously differentiable navigation paths, the path pursuit planner $\refplan_{\path}$ in \refeq{eq.pathpursuitplanner} is locally Lipschitz continuous and inward pointing on its domain boundary $\partial\refdomain_{\path}$, and it is asymptotically stable at $\path(1) = \goal$ whose domain of attraction includes the domain $\refdomain_{\path}$ \cite{arslan_koditschek_ICRA2017}.

\section{From Low to High Order Motion Planners:\\ Numerical Simulations}
\label{sec.NumericalSimulations}

In this section, we provide numerical simulations\footnote{For all simulations, we set the path pursuit planner gain $\gain_\path = 1$, the governor gain $\govgain = 4$ and the characteristic polynomial roots of the PhD control are uniformly spaced over $[-2, -1]$. All simulations are obtained  by numerically solving the associated robot-governor dynamics using the \texttt{ode45} function of MATLAB. Please see the accompanying video for the animated robot-governor motion.} to demonstrate smooth extensions of the first-order path pursuit planner for safe navigation of high-order robots around obstacles, using the PhD motion control and safety assessment based on Lyapunov ellipsoids and Vandermonde simplexes. 
In particular, we systematically investigate the role of motion range prediction and system order on governed robot motion.

\begin{figure*}[t]
\centering
\begin{tabular}{@{}c@{\hspace{0.01mm}}c@{\hspace{0.01mm}}c@{}}
\includegraphics[width=0.335\textwidth]{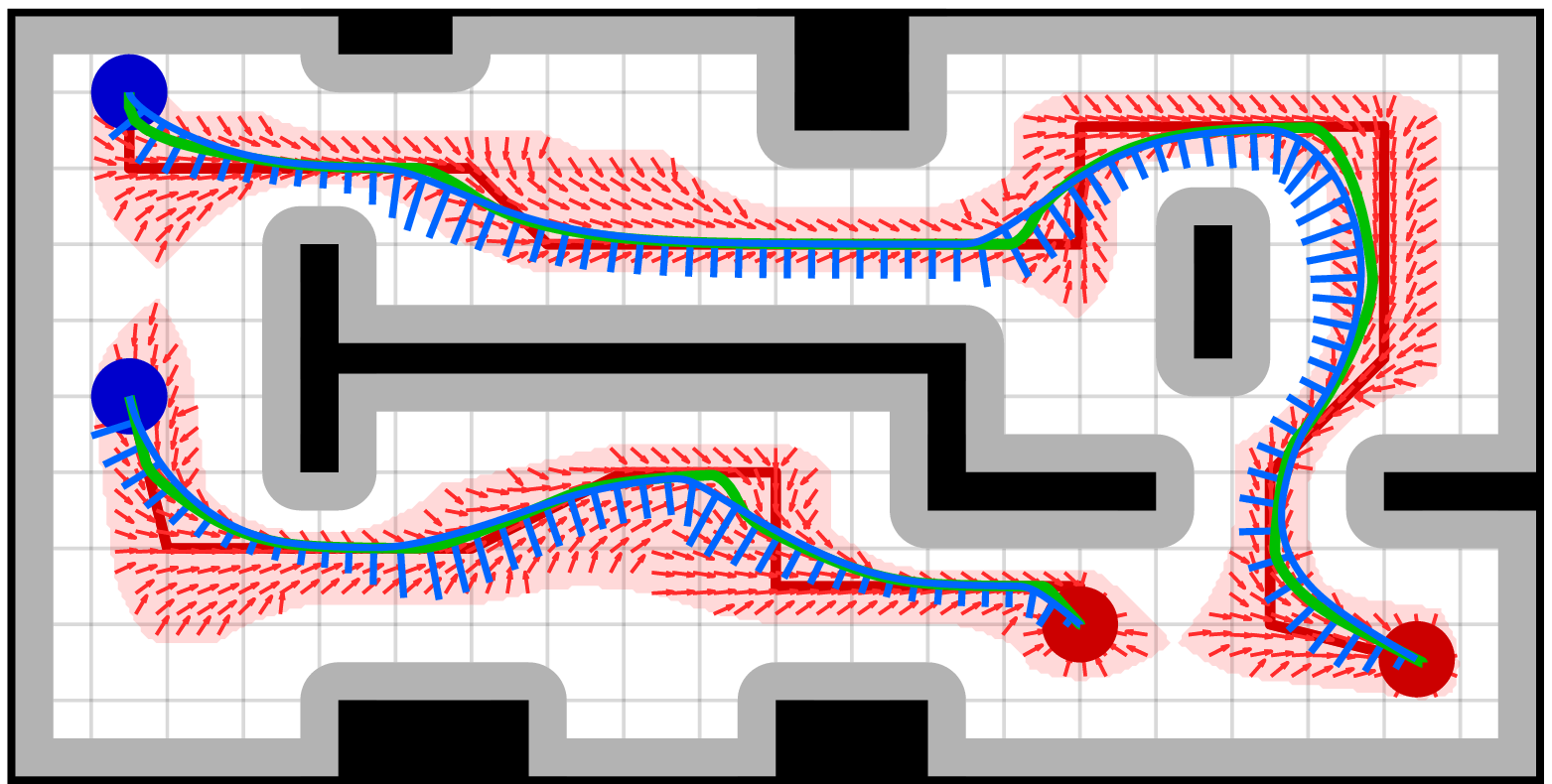} 
&
\includegraphics[width=0.335\textwidth]{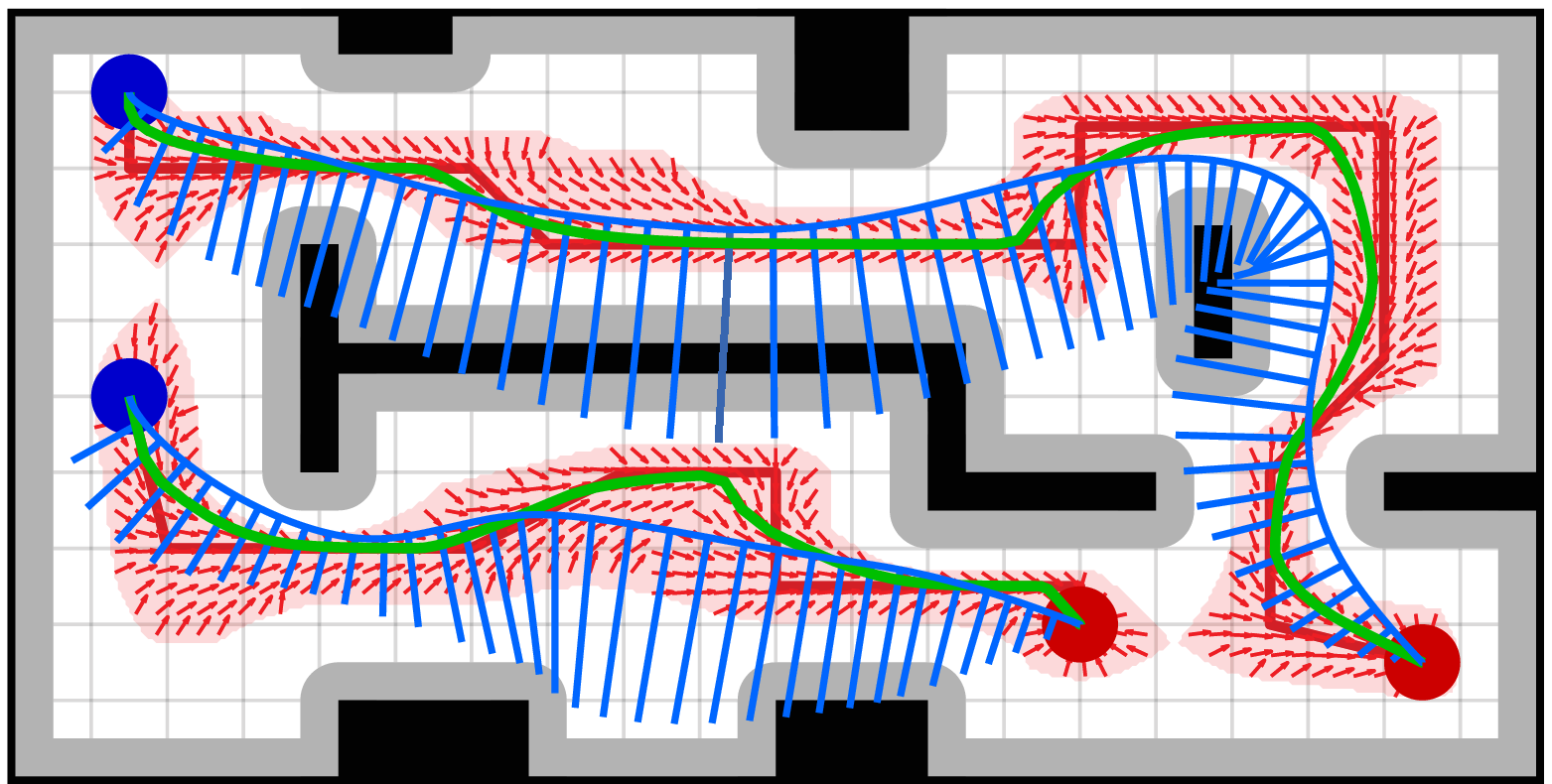} 
&
\includegraphics[width=0.335\textwidth]{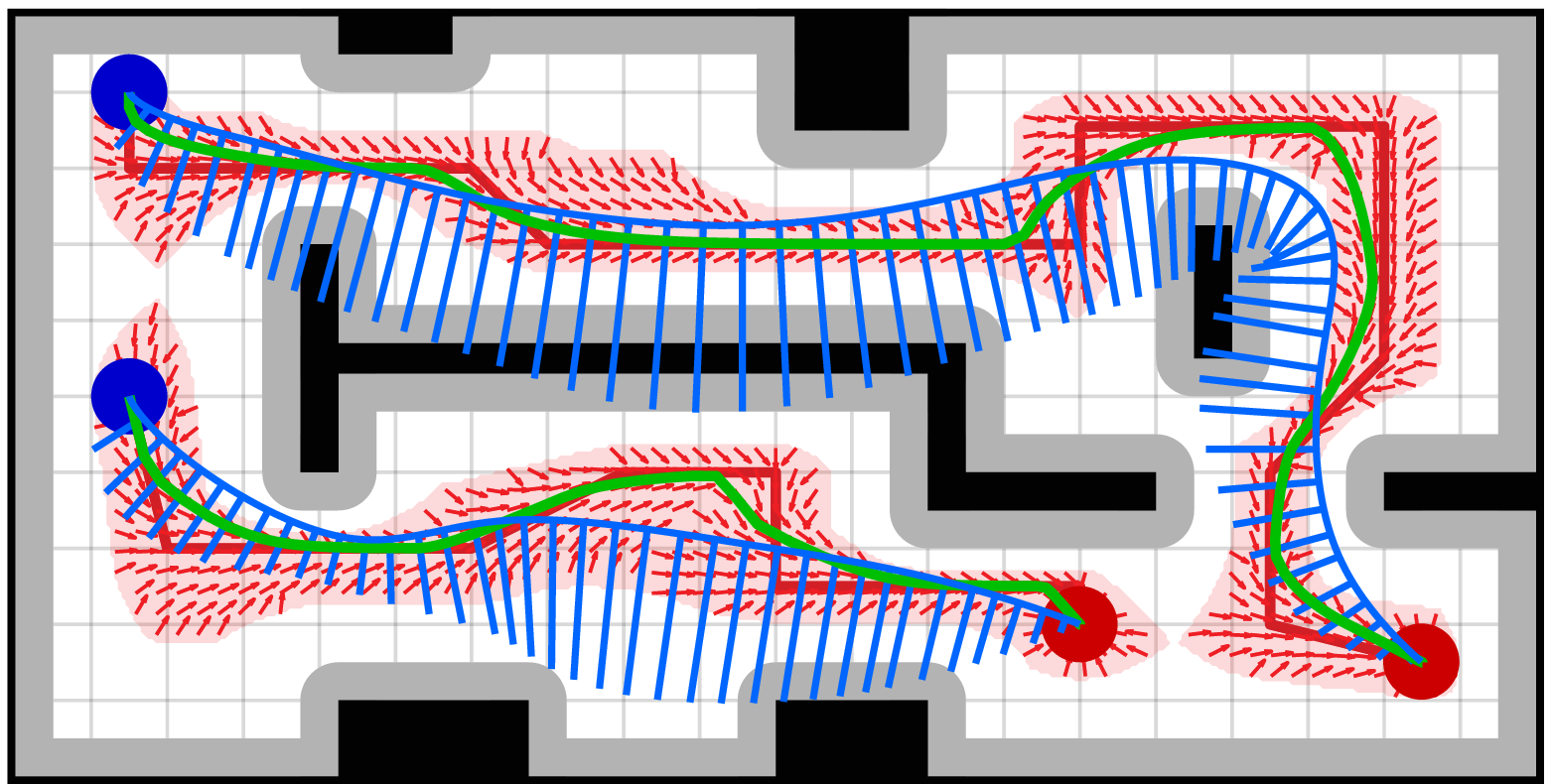} 
\end{tabular}
\vspace{-1mm}
\caption{Safe and smooth robot navigation in a cluttered environment using a first-order path pursuit reference planner (red arrows) that is  constructed based on a piecewise linear reference path (red line). 
Robot motion (blue lines)  for second-order (left, middle) and third-order (right) dynamics is constantly predicted relative to governor motion (green lines) using (left) Lyapunov ellipsoids and (middle, right) Vandermonde simplexes, where the robot speed is indicated by blue bars.   
}
\label{fig.cluttered_environment}
\vspace{-2mm}
\end{figure*}

\subsection{Safe Navigation in a Circular Corridor}

As a first example, we consider safe navigation in a circular narrow corridor since smooth motion planning and control in such tight spaces is a well-known challenge for high-order robots \cite{li_ICRA2020}. 
In \reffig{fig.SafeCorridorNavigation}, we illustrate the resulting robot trajectories and speeds for the second-, third-, and fourth-order robot dynamics, where the safety of robot motion relative to the governor is monitored using Lyapunov ellipsoids and Vandermonde simplexes.
As expected, the robot can reach to the desired destination of the path pursuit reference planner irrespective of the order of robot dynamics and motion range prediction, but the resulting robot motion significantly differs in terms of robot speed and so travel time.
As seen in \reffig{fig.motion_prediction_demo}, Lyapunov ellipsoids are more conservative in estimating robot motion range because Vandermonde simplexes has stronger dependency on robot's state and control parameters (see \refprop{prop.LyapunovEllipsoids} and \refprop{prop.motion_range_simplex}). 
As a result, Vandermonde simplexes always yield faster robot motion.
We observe that the robot is more cautious about sideways collisions with corridor walls when its motion range is predicted by Lyapunov ellipsoids.
Also, since motion range prediction is naturally more conservative for higher-order robot dynamics, we see that robot motion gets slower with increasing system order.

\subsection{Safe Navigation in a Cluttered Environment}

To demonstrate how motion prediction plays a critical role in adapting robot motion around complex obstacles, we consider safe robot navigation in a cluttered environment, illustrated in \reffig{fig.cluttered_environment}.
In such a environment, one might naturally expect that the robot slows down while making a turn around obstacles and speeds up if there is a large opening in front of the robot. 
This, of course, depends on many design parameters in planning and control. We observe in our numerical studies that motion prediction is a key factor. 
Conservative motion prediction like Lyapunov ellipsoids often has a tendency to slow down robot motion because the predicted robot motion cannot be accurately related to the environment. 
As seen in \reffig{fig.cluttered_environment} Lyapunov motion range prediction is limited in adapting robot motion around obstacles, whereas
Vandermonde simplexes  allow  the robot to leverage available space for faster navigation without compromising safety since Vandermonde simplxes are able to capture robot motion more accurately.
Therefore, accurate motion prediction is crucial for generating adaptive agile highly dynamic robot motion around complex (potentially dynamic) obstacles.
Finally, it is useful to note that  faster robot trajectories significantly deviate from the reference (governor) trajectories because a reference plan in our motion planning framework is considered as a high-level flexible navigation plan  towards the destination point as opposed to precise trajectory tracking control \cite{quinlan_khatib_ICRA1993}.

\section{Conclusions}
\label{sec.Conclusions}

In this paper, we introduce a provably correct feedback motion planning framework that extends safe navigation properties of simple first-order reference motion planners to  high-order robot dynamics using reference governors and safety assessment of predicted robot motion.
Our approach separates high-level planning and low-level control challenges to address them individually using standard tools from the motion planning and control literature.
We  establish a novel bidirectional interface between planning and control using reference governors and motion prediction. 
In addition to specifying generic motion planning elements, we provide example constructions  for motion control, prediction, and reference planning.    
We conclude that accurate motion prediction plays a key role in closing the gap between high-level planning and low-level control for generating agile robot motion.
In numerical simulations, we demonstrate the effectiveness of the proposed motion planning framework.

Work now in progress focuses on extending our motion planning framework to nonholonomically constrained robots such as differential drive vehicles and drones by designing new nonholonomic motion prediction algorithms. 
Another promising research direction is sensor-based safe robot navigation in dynamic and unknown environments \cite{arslan_koditschek_IJRR2019}.


\bibliographystyle{IEEEtran}
\bibliography{references}

%
\appendices 

\section{Proofs}

\subsection{Proof of \reflem{lem.DistanceLipschitz}}
\label{app.DistanceLipschitz}

\begin{proof}
The Lipschitz continuity of $\safedist \plist{f_{\mat{A}, \vect{b}}(X), Y}$ w.r.t. parameter $\vect{b}$ can be shown using the triangle inequality as
{\small
\begin{align}
\safedist \plist{f_{\mat{A}, \vect{b}}(X), Y} &  = \min_{\substack{\vect{x} \in X \\ \vect{y} \in Y}} \norm{\mat{A}\vect{x} + \vect{b} + \vect{b}'- \vect{b}' - \vect{y}}, 
\\
& \leq \min_{\substack{\vect{x} \in X \\ \vect{y} \in Y}} \norm{\mat{A}\vect{x} + \vect{b}' - \vect{y}} + \norm{\vect{b}- \vect{b}'}, 
\\
& = \safedist \plist{f_{\mat{A}, \vect{b}'}(X), Y}  + \norm{\vect{b}- \vect{b}'},
\end{align}
}%
which, by symmetry, implies \refeq{eq.DistanceLipschitz_b}.

Similarly, the Lipschitz continuity of $\safedist \plist{f_{\mat{A}, \vect{b}}(X), Y}$ w.r.t. parameter $\mat{A}$ can be shown using the triangle inequality and the submultiplicative property of matrix norms as 
{\small
\begin{align}
\safedist \plist{f_{\mat{A}, \vect{b}}(X), Y} \hspace{-15mm}& \hspace{+15mm} = \min_{\substack{\vect{x} \in X \\ \vect{y} \in Y}} \norm{\mat{A} \vect{x} + \mat{A}'\vect{x} - \mat{A}'\vect{x} + \vect{b} - \vect{y}},  
\\
& \leq \min_{\substack{\vect{x} \in X \\ \vect{y} \in Y}} \norm{\mat{A}' \vect{x} + \vect{b} - \vect{y}} + \norm{(\mat{A} - \mat{A'}) \vect{x}},  
\\
& \leq \min_{\substack{\vect{x} \in X \\ \vect{y} \in Y}} \norm{\mat{A}' \vect{x} + \vect{b} - \vect{y}} + \norm{\mat{A} - \mat{A'}} \norm{\vect{x}}, 
\\
& \leq \min_{\substack{\vect{x} \in X \\ \vect{y} \in Y}} \norm{\mat{A}' \vect{x} + \vect{b} - \vect{y}} + \norm{\mat{A} - \mat{A}'} \max_{\vect{x}' \in X}\norm{\vect{x}'}, 
\\
& = \safedist \plist{f_{\mat{A}', \vect{b}}(X), Y} + \norm{\mat{A} - \mat{A}'} \max_{\vect{x} \in X}\norm{\vect{x}},
\end{align}
}%
which, by symmetry, implies \refeq{eq.DistanceLipschitz_A}.  
\end{proof}

\subsection{Proof of \refprop{prop.LyapunovEllipsoids}}
\label{app.LyapunovEllipsoids}

\begin{proof}
As discussed above in \refeq{eq.lyapunov_state_bound}, the robot state trajectory $\state(t)$ is contained in the Lyapunov ellipsoid $\elp(\govstate, \mat{P}^{-1}, \norm{\state(0) - \govstate}_{\lyapmat})$ for all $t \geq 0$.
Accordingly, the robot motion trajectory $\pos(t)$ can be bounded by taking the orthogonal projection of the state space onto the positional subspace via the linear transformation $\state \mapsto \tr{\mat{I}_{\order \dimspace \times \dimspace}} \state$ as \cite{arslan_isleyen_2022}
\begin{align}
\pos(t) &= \tr{\mat{I}_{\order \dimspace \times \dimspace}} \state(t) \\
&\in  \clist{\tr{\mat{I}_{\order \dimspace \times \dimspace}}\mat{z} | \mat{z} \in \elp(\govstate, \lyapmat^{-1}, \norm{\state(0) - \govstate}_{\lyapmat}) },
\\
& = \elp(\govpos ,\tr{\mat{I}_{\order\dimspace\times\dimspace}}\Lyap^{-1}\mat{I}_{\order\dimspace\times\dimspace},\norm{\state(0)\!-\! \govstate}_{\lyapmat} ),
\end{align}
which completes the proof.
\end{proof}

\subsection{Proof of \refprop{prop.bounding_ball_lyapunov}}
\label{app.bounding_ball_lyapunov}

\begin{proof}
Let $\vect{z} = \govpos +  \norm{\state\!-\!\govstate}_{\lyapmat} (\tr{\mat{I}_{\order\dimspace\times\dimspace}}\lyapmat^{-\!1}\mat{I}_{\order\dimspace\times\dimspace} )^{\frac{1}{2}} \vect{u} $ be any point in $\elp(\ctrlgoal ,\tr{\mat{I}_{\order\dimspace\times\dimspace}}\lyapmat^{-\!1}\mat{I}_{\order\dimspace\times\dimspace}, \norm{\state\!-\!\govstate}_{\lyapmat} ) $, where $\vect{u} \in \R^{\dimspace}$ with $\norm{\vect{u}} \leq 1$.
One can verify the result using $\norm{\state\!-\!\govstate}_{\lyapmat} = \norm{\lyapmat^{\frac{1}{2}} (\state - \govstate)}$ and the sub-multiplicativity property of matrix norms as
\begin{align}
\norm{\vect{z} - \govpos} & = \norm{\lyapmat^{\frac{1}{2}}(\state\!-\!\govstate)} \norm {(\tr{\mat{I}_{\order\dimspace\times\dimspace}}\lyapmat^{-\!1}\mat{I}_{\order\dimspace\times\dimspace} )^{\frac{1}{2}} \vect{u}}, 
\\
& \leq  \norm{\lyapmat^{\frac{1}{2}}} \norm{\state\!-\!\govstate} \norm {(\tr{\mat{I}_{\order\dimspace\times\dimspace}}\lyapmat^{-\!1}\mat{I}_{\order\dimspace\times\dimspace} )^{\frac{1}{2}}}  \norm{\vect{u}},
\\
& \leq \norm{\lyapmat^{\frac{1}{2}}}  \norm {(\tr{\mat{I}_{\order\dimspace\times\dimspace}}\lyapmat^{-\!1}\mat{I}_{\order\dimspace\times\dimspace} )^{\frac{1}{2}}} \norm{\state\!-\!\govstate},
\end{align}  
which completes the proof.
\end{proof}

\subsection{Proof of \ref{prop.LyapunovSafetyLipschitz}}
\label{app.LyapunovSafetyLipschitz}

\begin{proof}
Observe that the projected Lyapunov ellipsoid is an affine transformation of the $\dimspace$-dimensional unit ball $\ball(\mat{0}_\dimspace, 1)$, 
\begin{align}
\elp(\govpos ,\tr{\mat{I}_{\order\dimspace\times\dimspace}}\lyapmat^{-\!1}\mat{I}_{\order\dimspace\times\dimspace}, \norm{\state\!-\!\govstate}_{\lyapmat} ) = f_{\mat{A}, \vect{b}}(\ball(\mat{0}_\dimspace, 1)),
\end{align}
where $f_{\mat{A}, \mat{b}}(\vect{z}) =  \mat{A} \vect{z} + \vect{b}$ with transformation parameters $\mat{A} =  \norm{\state\!-\!\govstate}_{\lyapmat} (\tr{\mat{I}_{\order\dimspace\times\dimspace}}\lyapmat^{-\!1}\mat{I}_{\order\dimspace\times\dimspace})^{\frac{1}{2}} $ and $\vect{b} = \govpos$ that are smooth Lipschitz functions of  the robot state $\state$ and the goal $\govpos$.
Therefore, the result follows from \reflem{lem.DistanceLipschitz}.
\end{proof}

\subsection{Proof of \refprop{prop.VandermondeSimplexBound}}
\label{app.VandermondeSimplexBound}

\begin{proof}
Any point  $\vect{z}  \in \vsimplex_{\govpos}(\state)$ can be written as a convex combination of Vandermonde simplex points based on some $(s_0, \ldots, s_\order) \in \simplex_{\order}$ as
{\small
\begin{align}
\vect{z} &= s_0 \govpos + \sum_{i=1}^{\order} s_i \sum_{j=0}^{i-1} \frac{\widehat{\gain}_j}{\widehat{\gain}_0} \pos^{(j)}, 
\\ 
&= \govpos + \sum_{i=1}^{\order} s_i (\pos^{(0)} - \govpos) + \sum_{i=1}^{\order} s_i \sum_{j=1}^{i-1} \frac{\widehat{\gain}_j}{\widehat{\gain}_0} \pos^{(j)}. 
\end{align}
}%
Hence, since $\widehat{\gain}_j > 0$, $s_i\geq 0 $ and $\sum_{i=0}^{\order} s_i = 1$, we have
{\small
\begin{align}
\norm{\vect{z} - \vect{y}} &= \norm{\sum_{i=1}^{\order}s_i (\pos^{(0)} - \govpos) + \sum_{i=1}^{\order} s_i \sum_{j=1}^{i-1}  \frac{\widehat{\gain}_j}{\widehat{\gain}_0} \pos^{(j)}},
\\
& \leq \sum_{i=1}^{\order}s_i \norm{\pos^{(0)} - \govpos} +  \sum_{i=1}^{\order} s_i \sum_{j=1}^{i-1}  \frac{\widehat{\gain}_j}{\widehat{\gain}_0} \norm{\pos^{(j)}} ,
\\
& \leq \norm{\pos^{(0)} - \govpos} + \sum_{j=1}^{\order-1}  \frac{\widehat{\gain}_j}{\widehat{\gain}_0} \norm{\pos^{(j)}} ,
\\
& \leq \frac{\max(\widehat{\gain}_0, \ldots, \widehat{\gain}_{\order-1})}{\widehat{\gain}_0} \bigg ( \!\norm{\pos^{(0)}\! - \govpos} + \sum\limits_{j=1}^{\order-1} \norm{\pos^{(j)}}\! \bigg),
\\
& \leq  \frac{\max(\widehat{\gain}_0, \ldots, \widehat{\gain}_{\order-1})}{\widehat{\gain}_0} \sqrt{\order} \norm{\state - \govstate}
\end{align}
}%
where the last inequality follows from the norm equivalence $\norm{\vect{a}}_1 \leq \sqrt{\order} \norm{\vect{a}}_2$ for any $\vect{a} \in \R^{\order}$. 
Thus, the result follows.
\end{proof}

\subsection{Proof of \refprop{prop.VandermondeSafetyLipschitz}}
\label{app.VandermondeSafetyLipschitz}

\begin{proof}
The result directly follows from \reflem{lem.DistanceLipschitz} since the Vandermonde simplex $\vsimplex_{\govpos}(\state)$ is a linear transformation of the standard $\order$-simplex $\simplex_{\order}$,  as expressed in \refeq{eq.VandermondeSimplexLinearTransformation}. 
\end{proof}


\end{document}